\def\1{\bm{1}}
\DeclareMathAlphabet{\mathsfit}{\encodingdefault}{\sfdefault}{m}{sl}
\SetMathAlphabet{\mathsfit}{bold}{\encodingdefault}{\sfdefault}{bx}{n}
\newcommand{\E}{\mathbb{E}}
\newcommand{\R}{\mathbb{R}}
\DeclareMathOperator*{\argmin}{arg\,min}
\newcommand{\D}{\mathcal{D}}
\newcommand{\N}{\mathbb{N}}
\newtheorem{thm}{Theorem}[section] 
\newtheorem{lem}[thm]{Lemma} 
\newtheorem{ass}[thm]{Assumption} 
\newtheorem{pro}[thm]{Proposition}    %
\newtheorem{remark}[thm]{Remark}
\newcommand{\I}{\mathcal{I}}
\newcommand{\F}{\mathcal{F}}
\title{A Reinforcement Learning Approach to Estimating Long-term Treatment Effects}
\author{
Ziyang Tang \quad Yiheng Duan\quad Stephanie Zhang\quad Lihong Li\\
Amazon 
}
\date{}
\begin{document}

\maketitle

\begin{abstract}
    Randomized experiments (a.k.a. A/B tests) are a powerful tool for estimating treatment effects, to inform decisions making in business, healthcare and other applications. In many problems, the treatment has a lasting effect that evolves over time. A limitation with randomized experiments is that they do not easily extend to measure long-term effects, since running long experiments is time-consuming and expensive. In this paper, we take a reinforcement learning (RL) approach that estimates the average reward in a Markov process. Motivated by real-world scenarios where the observed state transition is nonstationary, we develop a new algorithm for a class of nonstationary problems, and demonstrate promising results in two synthetic datasets and one online store dataset.
\end{abstract}

\section{Introduction}

Randomized experiments (a.k.a. A/B tests) are a powerful tool for estimating treatment effects, to inform decisions making in business, healthcare and other applications. In an experiment, units like customers or patients are randomly split into a treatment bucket and a control bucket.
For example, in a rideshare app, drivers in the control and treatment buckets are matched to customers in different ways (e.g., with different spatial ranges or different ranking functions). After we expose customers to one of these options for a period of time, usually a few days or weeks, we can record the corresponding customer engagements, and run a statistical hypothesis test on the engagement data to detect if there is a statistically significant difference in customer preference of treatment over control. The result will inform whether the app should launch the treatment or control.

While this method has been widely successful (e.g., in online applications~\citep{Kohavi20Trustworthy}), it typically measures treatment effect \emph{during} the short experiment window.
However, in many problems, a treatment has a lasting effect that evolves over time. 
For example, a treatment that increases installation of a mobile app may result in a drop of short-term profit due to promotional benefits like discounts.
But the installation allows the customer to benefit from the app, which will increase future engagements and profit in the long term.
A limitation with standard randomized experiments is that they do not easily extend to measure long-term effects. We can run a long experiment for months or years to measure the long-term impacts, which however is time-consuming and expensive. We can also design proxy signals that are believed to correlate with long-term engagements~\citep{kohavi2009online}, but finding a reliable proxy is challenging in practice.
Another solution is the surrogacy method that estimates \emph{delayed} treatment impacts from surrogate changes during the experiment~\citep{athey2019surrogate}. However, it does not estimate long-term impacts resulting from long-term treatment exposure, but rather from short-term exposure during the experiment.

In this paper, we take a reinforcement learning (RL) approach, inspired by infinite-horizon off-policy evaluation (OPE)~\citep{Liu18Breaking,NachumCD019, xie2019towards, kallus20double, uehara2020minimax, chandak2021universal}. 
Here, the long-term treatment effect can be framed as the difference in average long-term reward of control and treatment policies. A standard RL approach uses transitions observed during a (short) experiment to predict long-term rewards. 
Motivated by real-world scenarios where the observed state transitions are nonstationary, we consider a class of nonstationary problems, where the observation consists of two additive terms: an endogenous term that follows a stationary Markov process, and an exogenous term that is time-varying but independent of the policy. Based on this assumption, we develop a new algorithm to jointly estimate long-term reward and the exogenous variables.

Our contributions are threefold. First, it is a novel application of RL to estimate long-term treatment effects, which is challenging for standard randomized experiments. Second, we develop an estimator for a class of nonstationary problems that are motivated by real-world scenarios, and give a preliminary theoretical analysis. Third, we demonstrate promising results in two synthetic datasets and one online store dataset.

\section{Background}

\subsection{Long-Term Treatment Effects}

Let $\pi_0$ and $\pi_1$ be the control and treatment policies, used to serve individual in respective buckets. 
In the rideshare example, a policy may decide how to match a driver to a nearby request.
During the experiment, each individual (the driver) is randomly assigned to one of the policy groups, and we observe a sequence of behavior features of that individual under the influence of the assigned policy.
We use variable $D\in \{0,1\}$ to denote the random assignment of an individual to one of the policies. The observed features are denoted as a sequence of random variable in $\R^d$
$$O_0, O_1,\ldots,O_t,\ldots,$$ 
where the subscript $t$ indicates time step in the sequence.
A time step may be one day or one week, depending on the application.
Feature $O_t$ consists of information like number of pickup orders.
We are interested in estimating the difference in average long-term reward between treatment and control policies:
\begin{equation}\label{eq:long_term_impact}
    \Delta = \E[\sum_{t=0}^\infty \gamma^t R_t|D = 1] - \E[\sum_{t=0}^\infty \gamma^t R_t|D=0],
\end{equation}
where $\E$ averages over individuals and their stochastic sequence of engagements, and $\gamma\in (0,1)$ is the discounted factor. 
The discounted factor $\gamma$ is a hyper-parameter specified by the decision maker to indicate how much they value future reward over the present. The closer $\gamma$ is to $1$, the greater weight future rewards carry in the discounted sum.

Suppose we have run a randomized experiment with the two policies for a short period of $T$ steps.
In the experiment, a set of $n$ individuals are randomly split and exposed to one of the two policies $\pi_0$ and $\pi_1$. 
We denote by $d_j\in\{0,1\}$ the policy assignment of individual $j$, and $\I_i$ the index set of individuals assigned to $\pi_i$, i.e., $j\in \I_i$ iff $d_j = i$.
The in-experiment trajectory of individual $j$ is:
$$
    \tau_j = \{o_{j,0}, o_{j,1}, \ldots, o_{j,T}\}.
$$
The in-experiment dataset is the collection of all individual data as
$
    \D_n = \{(\tau_j, d_j)\}_{j=1}^n. 
$
Our goal is to find an estimator $\hat{\Delta}(\D_n) \approx \Delta$.

\subsection{Estimation under Stationary Markovian Dynamics}
\label{sec:stationary}

Inspired by recent advances in off-policy evaluation (OPE)~\citep[e.g.][]{Liu18Breaking, nachum2019dualdice}, the simplest assumption is a fully observed Markov Process 
that the observation in each time step can fully predict the future distribution under a stationary dynamic kernel.
In this paper, we assume the dynamic kernel and reward function are both linear for simplicity,
following the setting in \citet{parr2008analysis}.
\begin{ass} \label{ass:linear_dynamics}
    (Linear Dynamics) there is a matrix $M_i$ such that 
    \begin{equation}
        \E[O_{t+1}|O_t = o, D=i] = M_i o,~\forall t\in \N, i\in 
        \{0,1\}.
    \end{equation}
\end{ass}
\begin{remark}
Unlike standard RL, we don't have an explicit action for a policy. 
The difference between the control and treatment policy is revealed by different transition matrix $M$.
\end{remark}

\begin{ass}\label{ass:linear_reward}
    (Linear Reward)
    There is a coefficient vector $\theta_r\in\R^d$ such that
    \begin{equation}\label{eq:linear_reward}
        r(O_t) = \theta_r^\top O_t,~\forall t\in \N.
    \end{equation}
\end{ass}

\begin{remark}
The reward signal may be one of the observed features.
For example, if we are interested in customer rating, and rating is one of the observe features,
then $\theta_r$ is just a one-hot vector with $1$ in the corresponding coordinate.
When the reward is complex with unknown coefficient, we can use ordinary least-squares to estimate the coefficient $\theta_r$.
\end{remark}

\begin{pro}\label{pro:stationary_closed_form}
Under Assumption~\ref{ass:linear_dynamics} and \ref{ass:linear_reward},
if the spectral norm of $M_i$ is smaller than $\frac{1}{\gamma}$, 
then the expected long-term reward of policy $\pi_i$, $v(\pi_i) := \E[\sum_{t=0}^{\infty} \gamma^t R_t|D=i]$, can be obtained by:
\begin{equation}\label{eq:closed_form}
    v(\pi_i) = \theta_r^\top (I-\gamma M_i)^{-1} \bar{O}_0^{(i)},~~\text{where}~~\bar{O}_0^{(i)} := \E[O_0|D=i].
\end{equation}
\end{pro}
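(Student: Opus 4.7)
The plan is to unroll the discounted sum, iterate Assumption~\ref{ass:linear_dynamics} to write $\E[O_t \mid D=i]$ in closed form, and then collapse the resulting matrix geometric series using the Neumann identity, whose convergence is guaranteed precisely by the spectral norm condition $\|M_i\| < 1/\gamma$.

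\textbf{Step 1 (reduce to computing $\E[O_t \mid D=i]$).} Using Assumption~\ref{ass:linear_reward} and linearity of expectation, I would write
\[
v(\pi_i) \;=\; \E\!\left[\sum_{t=0}^\infty \gamma^t \theta_r^\top O_t \,\middle|\, D=i\right] \;=\; \theta_r^\top \sum_{t=0}^\infty \gamma^t\, \E[O_t \mid D=i].
\]
The interchange of expectation and infinite sum will be justified by dominated convergence, using the exponential bound derived in Step~2 together with $\gamma \|M_i\| < 1$.

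\textbf{Step 2 (iterate the linear dynamics).} I would prove by induction on $t$ that $\E[O_t \mid D=i] = M_i^t\, \bar O_0^{(i)}$. The base case $t=0$ is the definition of $\bar O_0^{(i)}$. For the inductive step, the tower property combined with Assumption~\ref{ass:linear_dynamics} gives
\[
\E[O_{t+1} \mid D=i] \;=\; \E[\, \E[O_{t+1} \mid O_t, D=i] \mid D=i\,] \;=\; \E[M_i O_t \mid D=i] \;=\; M_i^{t+1} \bar O_0^{(i)}.
\]
Submultiplicativity of the spectral norm then yields $\|\E[O_t \mid D=i]\| \le \|M_i\|^t \|\bar O_0^{(i)}\|$, which supplies the dominating sequence needed in Step~1.

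\textbf{Step 3 (sum the Neumann series).} Substituting Step~2 into Step~1,
\[
v(\pi_i) \;=\; \theta_r^\top \left(\sum_{t=0}^\infty (\gamma M_i)^t\right) \bar O_0^{(i)}.
\]
Since $\|\gamma M_i\| < 1$ by hypothesis, the Neumann series converges in operator norm to $(I - \gamma M_i)^{-1}$, giving the claim. There is no substantive obstacle here: the proof relies only on linearity of expectation, one induction via the tower property, and the standard Neumann series identity. The sole piece of bookkeeping is the exchange of expectation and infinite summation in Step~1, which is routine given the geometric bound from Step~2.
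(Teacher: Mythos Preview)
Your proposal is correct and follows essentially the same argument as the paper: prove $\E[O_t\mid D=i]=M_i^t\bar O_0^{(i)}$ by induction (the paper conditions on $O_0$ first and then averages, which is equivalent to your tower-property step), then sum the Neumann series using $\|\gamma M_i\|<1$. If anything, your version is slightly more careful in justifying the interchange of expectation and infinite sum, which the paper leaves implicit.
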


The only remaining step is to estimate $\bar{O}_0^{(i)}$ and $M_i$.
The former can be directly estimated from the Monte Carlo average of the experimental data:
$
\hat{O}_0^{(i)} = \frac{1}{n_i}\sum_{j\in \I_i} o_{0,j},
$
where $n_i = |\I_i|$ is the number of individuals assigned to policy $\pi_i$.
To estimate the latter, we may use ordinary least-squares on observed transitions:
\begin{equation} \label{eqn:closed_form_M_stationary}
    \hat{M}_i = \left(\sum_{j\in \I_i} \sum_{t=0}^{T-1}o_{j,t+1} o_{j,t}^\top\right)\left(\sum_{j\in \I_i} \sum_{t=0}^{T-1}o_{j,t} o_{j,t}^\top\right)^{-1}.
\end{equation}
The detailed derivation can be found in~\citep{parr2008analysis}.
Once we get the estimated value of $\hat{v}_i \approx v(\pi_i)$, the long term impact in Eq.~\eqref{eq:long_term_impact} can be estimated as:
$$
    \hat{\Delta} = \hat{v}_1 - \hat{v}_0.
$$

\begin{remark}
Although this a model-based estimator, it is equivalent to other OPE estimator in general under linear Markovian assumption~\citep[e.g.,][]{nachum2019dualdice,duan2020minimax, miyaguchi21} and it enjoys similar statistical guarantees as other OPE estimators.
\end{remark}

\section{Our Method}

In Section~\ref{sec:stationary}, we assumed the observation $O_t$ follows a stationary Markov process, 
and derived a model-based closed-form solution based on linear reward Assumption~\ref{ass:linear_reward}.

\begin{wrapfigure}{r}{0.45\textwidth}
  \begin{center}
    \includegraphics[width=.4\textwidth]{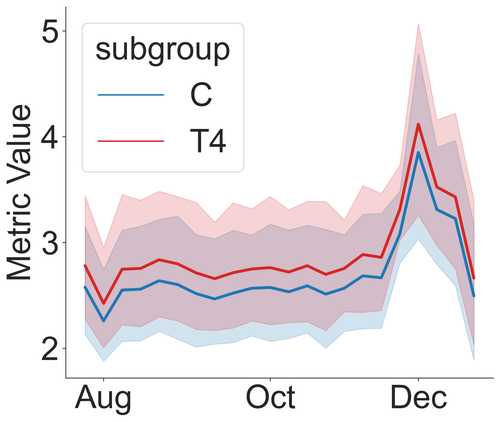}
  \end{center}
  \caption{An example of non-stationarity. The weekly average metric value is highly non-stationary during holiday season.}
  \label{fig:motivation}
\end{wrapfigure}
In reality, this model assumption has two major limitations. 
First, real-world environments are non-stationary. 
For example, in a hotel reservation system, seasonality heavily influences the prediction of the future booking count.
Our stationary assumption does not capture those seasonal changes, 
resulting in poorly learned models and inaccurate predictions of long-term treatment effects.
Second, in practice, we are unable to ensure that observed features fully capture the dynamics. OPE methods based on stationary and full observability assumptions are unlikely to work robustly in complex, real-life scenarios.

Figure~\ref{fig:motivation} illustrates nonstationarity in data from an online store (see Section~\ref{sec:exp} for more details).
The figure shows how the weekly average of a business metric changes in a span of 5 months, for two policies (C for control, and T4 for treatment).
Such highly non-statioanary data, especially during special seasons towards the right end of the plot, are common. %
However, the difference of the two policy groups remains much more stable. This is expected as both policies are affected by the same exogenous affects (seasonal variations in this example).

Figure~\ref{fig:motivation} motivates a relaxed model assumption~(Section~\ref{sec:nonstationary}),
by introducing a non-stationary exogenous component on top of a stationary hidden state $S_t$.
Our new assumption is that the observation $O_t$ can be decomposed additively into two parts: an endogenous part still follows a stationary Markovian dynamic for each policy group (treatment or control);
and an exogenous part which is time-varying and shared across all policy groups. 
Based on the new assumption we propose an alternating minimization algorithm that jointly estimates both transition dynamics and exogenous variables.

\subsection{Nonstationary Model Relaxation}
\label{sec:nonstationary}

We assume there is an exogenous noise vector $z_t$ for each time step $t$, to represent the linear additive exogenous noise in the uncontrollable outside world such as seasonal effect, which applies uniformly to every individual under each treatment bucket.
We relax Assumption~\ref{ass:linear_dynamics} as the following:
\begin{ass}\label{ass:nonstationary}
(Linear Additive Exogenous Noise) the observational feature $O_t$ is the sum of the endogenous hidden features and the time-varying exogenous noise $z_t$.
$$
    O_t = S_t + z_t,~\forall t\in \N.
$$
where $z_t$ does not depend on policy or any individual in the experiments and $S_t$ follows the linear Markovian kernel with transition matrix $M_i$:
    \begin{equation}\label{eqn:linear_state_transition}
        \E[S_{t+1}|S_t = s, D=i] = M_i s,~\forall t\in \N, i\in \{0,1\}.
    \end{equation}
\end{ass}

\begin{remark}[Explanation of the Linear Additive Model]
Our linear additive model is inspired by the parallel trend assumption in the Difference-in-Difference (DID) estimator~\citep{lechner2011estimation}.
In real-world environments, it is impossible to capture all the covariates that may effect the dynamics.
The linear additive exogenous noise $z_t$ can be seen as the drive from the outside that is both unobserved and uncontrol.
For example, in an intelligent agriculture system, the highly non-stationary weather condition can be seen as exogenous which we cannot control, but the amount of water and fertilizer that affect the growth of the plant can be seen as the hidden state that is controlled by a pre-defined stationary policy.
And we add up those two factors as the features (e.g., the condition of the crop) we observed in the real world.
\end{remark}

From Assumption~\ref{ass:nonstationary} and linear reward assumption in \ref{ass:linear_reward}, the closed form of $v(\pi_i)$ can be rewritten as:
\begin{pro}
Under Assumption~\ref{ass:nonstationary} and \ref{ass:linear_reward}, and suppose $v(z_{\infty}) := \sum_{t=0}^\infty \gamma^t z_t < \infty$.
Suppose the spectral norm of $M_i$ is smaller than $\frac{1}{\gamma}$, the expected long-term reward can be obtained by:
\begin{equation}\label{eq:closed_form_nonstationary}
    v(\pi_i) = \theta_r^\top (I-\gamma M_i)^{-1} \bar{S}_0^{(i)} + v( z_{\infty}),~~\text{where}~~\bar{S}_0^{(i)} = \E[S_0|D=i].
\end{equation}
\end{pro}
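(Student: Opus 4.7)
The plan is to plug the decomposition $O_t = S_t + z_t$ into the reward $R_t = \theta_r^\top O_t$, split the long-term return into an endogenous and an exogenous component, and then evaluate each separately using tools from the stationary case (Proposition~\ref{pro:stationary_closed_form}).

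First, by Assumption~\ref{ass:linear_reward} and Assumption~\ref{ass:nonstationary},
\[
    R_t \;=\; \theta_r^\top O_t \;=\; \theta_r^\top S_t + \theta_r^\top z_t .
\]
Taking expectation conditional on $D=i$ and using that $z_t$ is exogenous (independent of policy and individual), I would write
\[
    v(\pi_i) \;=\; \theta_r^\top \E\!\left[\sum_{t=0}^\infty \gamma^t S_t \,\middle|\, D=i\right] + \theta_r^\top \sum_{t=0}^\infty \gamma^t z_t .
\]
Before swapping the sum and expectation, I would justify absolute convergence of the endogenous series: iterating Eq.~\eqref{eqn:linear_state_transition} gives $\E[S_t\mid D=i] = M_i^t \bar{S}_0^{(i)}$, and the spectral-norm bound $\|M_i\|<1/\gamma$ makes $\sum_{t\ge 0}\gamma^t \|M_i^t\|$ a convergent geometric series, so Fubini/dominated convergence applies and the swap is valid. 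The exogenous series converges by the standing assumption $v(z_\infty)<\infty$.

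Next, the endogenous term becomes $\theta_r^\top \sum_{t=0}^\infty (\gamma M_i)^t\, \bar{S}_0^{(i)}$, and the spectral-norm condition allows me to invoke the Neumann series identity $\sum_{t=0}^\infty (\gamma M_i)^t = (I-\gamma M_i)^{-1}$. Collecting terms yields the claimed identity
\[
    v(\pi_i) \;=\; \theta_r^\top (I-\gamma M_i)^{-1} \bar{S}_0^{(i)} + \theta_r^\top v(z_\infty),
\]
which matches Eq.~\eqref{eq:closed_form_nonstationary} (interpreting the trailing ``$+v(z_\infty)$'' as $\theta_r^\top v(z_\infty)$, since $v(z_\infty)$ as defined is a vector in $\R^d$).

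\paragraph{Main obstacle.} The algebraic manipulation is essentially the derivation of Proposition~\ref{pro:stationary_closed_form} applied to $S_t$ plus a purely exogenous correction. The only nontrivial step is the justification of swapping expectation and the infinite sum, which rests on the spectral-norm hypothesis $\|M_i\|<1/\gamma$; without this the Neumann series need not converge and the long-term reward from the endogenous part may be ill-defined. The exogenous piece is handled directly by the finiteness assumption on $v(z_\infty)$, and no further probabilistic argument is needed because $z_t$ is non-random in the policy/individual.
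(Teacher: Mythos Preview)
Your proposal is correct and is precisely the argument the paper has in mind: the paper does not give a separate proof of this proposition, but its proof of Proposition~\ref{pro:stationary_closed_form} (induction to get $\E[S_t\mid D=i]=M_i^t\bar S_0^{(i)}$, then sum the Neumann series under $\|M_i\|<1/\gamma$) is exactly the computation you apply to the endogenous part after splitting $O_t=S_t+z_t$. Your observation that the trailing term should be read as $\theta_r^\top v(z_\infty)$ is also correct, and since this scalar cancels in $\Delta=v(\pi_1)-v(\pi_0)$ it does not affect anything downstream.
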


The long-term reward in  Eq.~\eqref{eq:closed_form_nonstationary} contains $v(z_{\infty})$, which depends on the unknown exogenous noise sequence outside of the experimental window and thus is unpredictable. However, the long term treatment effect, $\Delta(\pi_1, \pi_0) = v(\pi_1) - v(\pi_0)$, cancels out the dependency on that exogenous term $v(z_{\infty})$.
For simplicity, we redefine $v(\pi_i) = \theta_r^\top (I-\gamma M_i)^{-1} \bar{S}_0^{(i)}$ without the term of $v(z_\infty)$.
Therefore, the only thing we need to estimate is $\bar{S}_0^{(i)}$ and $M_i$.
Once we have the access of $z_0$, we can estimate $\bar{S}_0^{(i)}$ similarly as Monte Carlo sample: $\hat{S}_0 = \frac{1}{n_i} \sum_{j\in \I_i} o_{0,j} - \hat{z}_0$.
The next question is how to estimate in-experiment exogenous variable $z_t$ and the transition kernels of the underlying  Markov Process.

\subsection{Optimization Framework}
We propose to optimize $\{z_t\}_{1\leq t\leq T}$ and $\{M_0, M_1\}$ jointly under a single loss function, with the same spirit of reducing the reconstruction loss of each transition pair similar to the model-based approach.

For each individual $j$ in treatment group $i$, 
Assumption~\ref{ass:nonstationary} implies that at time step $t+1$, the observation $o_{j,t+1}$ can be written as:
\begin{equation}\label{eqn:nonstationary_equation}
    o_{j,t+1} - z_{t+1} = M_i (o_{j,t} - z_{t}) + \varepsilon_{j,t}, ~\forall j\in \I_i,~1\leq t\leq T-1,
\end{equation}
where $\varepsilon_{j,t}$ is a noise term with zero mean, so that
$M_i (o_{j,t} - z_{t}) = \E[S_{t+1}|S_t = o_{j,t} - z_t, D=i]$.

Inspired by Eq.~\eqref{eqn:nonstationary_equation}, given observation history $\D_n$, in order to minimize the empirical reconstruct risk by each transition pair $(o_{j,t}, o_{j,t+1})$, we construct the following loss function
\begin{equation}\label{eqn:loss_function}
    \mathcal{L}(M_0, M_1, \{z_t\}_{1\leq t\leq T}; \D_n) = \sum_{i=0}^1 \sum_{j\in \I_i} \sum_{t=0}^{T-1} \| o_{j,t+1} - z_{t+1} - M_i(o_{j,t} - z_{t}) \|_2^2.
\end{equation}

To simplify the notation, Eq.~\eqref{eqn:loss_function} can be rewritten as a vectorized form
\begin{equation}\label{eqn:vectorized_loss_function}
    \mathcal{L}(M_0, M_1, z; \D_n) = \sum_{i=0}^1 \sum_{j\in \I_i} \|A_i (o_j - z)\|_2^2,
\end{equation}
where $o_j = \left(\begin{array}{c}
    o_{j,0}  \\
    o_{j,1} \\
    \ldots \\
    o_{j,T}
\end{array}
\right),$ and $z = \left(\begin{array}{c}
    z_0  \\
    z_1 \\
    \ldots \\
    z_T
\end{array}
\right)$ are column vector aggregate over the experiment time horizon, and $A_i$ is a $dT\times d(T+1)$ matrix constructing by a block matrix $M_i$:
\begin{equation}\label{eqn:defA}
    A_i = \left[\begin{array}{ccccc}
        -M_i & I & ... & & \\
        0 & -M_i & ... & & \\
         &  & ... & & \\
         &  & ... & I & 0 \\
         &  & ... & -M_i & I \\
    \end{array}
    \right]_{dT\times d(T+1)}.
\end{equation}

\subsection{Alternating Minimization}
To reconstruct $M_i$ and $z$, we apply alternating minimization on the loss function $\mathcal{L}(M_0, M_1, z; \D_n)$ in Eq.~\eqref{eqn:vectorized_loss_function}.
By looking at the zero-gradient point of the loss function, under proper non-degenerate assumption (see Appendix for details), we have:
\begin{pro}
Suppose $(n_0 G_0 + n_1 G_1)$ is nonsingular,
the minimizer of $z$ given $M_i$ is a closed-form solution in the followings:
\begin{equation}\label{eqn:closed_form_z}
    \arg\min_{z} \mathcal{L}(M_0, M_1, z; \D_n) = (n_0 G_0 + n_1 G_1)^{-1} (\sum_{i=0}^1 \sum_{j\in\I_i} G_i o_j),
\end{equation}
where $G_i = A_{i}^\top A_{i}$.
\end{pro}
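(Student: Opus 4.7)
The plan is to observe that for fixed $M_0, M_1$, the loss $\mathcal{L}(M_0, M_1, z; \D_n)$ is a convex quadratic function of $z$, so its unique global minimizer (when it exists) is characterized by the first-order stationarity condition $\nabla_z \mathcal{L} = 0$. Solving this linear system and invoking the nonsingularity assumption on $n_0 G_0 + n_1 G_1$ will give the stated closed form.

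First, I would rewrite each summand using the definition $G_i = A_i^\top A_i$:
\begin{equation*}
\|A_i(o_j - z)\|_2^2 = (o_j - z)^\top G_i (o_j - z).
\end{equation*}
Since $G_i$ is symmetric positive semidefinite and the outer sum over $i, j$ preserves convexity, $\mathcal{L}$ is convex in $z$. Next I would compute the gradient term by term, using the standard identity $\nabla_z (o_j - z)^\top G_i (o_j - z) = -2 G_i (o_j - z)$, yielding
\begin{equation*}
\nabla_z \mathcal{L}(M_0, M_1, z; \D_n) = -2 \sum_{i=0}^{1} \sum_{j \in \I_i} G_i (o_j - z).
\end{equation*}

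Setting the gradient to zero and separating the terms involving $z$ from those involving the data, I would obtain
\begin{equation*}
\Bigl( \sum_{i=0}^{1} |\I_i|\, G_i \Bigr) z \;=\; \sum_{i=0}^{1} \sum_{j \in \I_i} G_i o_j,
\end{equation*}
which is exactly $(n_0 G_0 + n_1 G_1)\, z = \sum_{i=0}^{1} \sum_{j \in \I_i} G_i o_j$ since $|\I_i| = n_i$. Under the assumption that $n_0 G_0 + n_1 G_1$ is invertible, left-multiplying by its inverse gives the claimed formula. Convexity together with nonsingularity (which rules out a nontrivial null direction along which $\mathcal{L}$ is constant) ensures that this stationary point is the unique global minimizer.

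There is essentially no main obstacle: the computation is routine matrix calculus on a convex quadratic. The only care needed is bookkeeping — making sure the block structure of $A_i$ in Eq.~\eqref{eqn:defA} is respected so that $G_i$ has the correct $d(T+1) \times d(T+1)$ shape compatible with $z \in \R^{d(T+1)}$, and noting that the nonsingularity hypothesis is exactly what is needed to turn the normal equations into a unique solution rather than an affine subspace of minimizers.
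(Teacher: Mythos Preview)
Your proposal is correct and follows essentially the same approach as the paper: compute the gradient of the quadratic loss in $z$, set it to zero, and solve the resulting linear system using the nonsingularity assumption. Your write-up is in fact slightly more careful than the paper's, since you explicitly invoke convexity to justify that the stationary point is the global minimizer.
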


\begin{algorithm*}[t] 
    \caption{Estimating Long-Term Effect Under Non-stationary Dynamics}  
    \label{algo1}
    \begin{algorithmic} 
    \STATE {\bf Input}: In-experiment training Data $\D_n = \{(\tau_j, d_j)\}_{j=1}^n$, where $\tau_i = (o_{j,0}, o_{j,1}, \ldots, o_{j,T})$ is the in-experiment observation features for individual $j$, $d_j\in \{0,1\}$ is the indicator of which policy group individual $j$ is assigned to.
    \STATE {\bf Initialize} the estimation of exogenous noise $\hat{z} = 0$.
    \STATE {\bf Optimization}:
    \WHILE{not convergent}
    \item Update $M_i$ as the ordinary least square solution given the current $\hat{z}$:
    $$
    \hat{M}_i = \left(\sum_{j\in \I_i} \sum_{t=1}^{T-1}(o_{j,t+1} - \hat{z}_{t+1})(o_{j,t} - \hat{z}_t)^\top\right)\left(\sum_{j\in \I_i}\sum_{t=1}^{T-1} (o_{j,t} - \hat{z}_{t})(o_{j,t} - \hat{z}_t)^\top\right)^{-1}.
    $$
    \item Update $\hat{z}$ according to Eq.~\eqref{eqn:closed_form_z}:
    $$
        \hat{z} = (n_0 G_0 + n_1 G_1)^{-1} (\sum_{i=0}^1 \sum_{j\in\I_i} G_i o_j).
    $$
    \ENDWHILE
    \STATE {\bf Evaluation}: \\
    \quad Compute $\hat{v}_i = \theta_r^\top (I-\gamma \hat{M}_i)^{-1} \left(\hat{O}_0^{(i)} - \hat{z}_0\right)$, where $\hat{O}_0^{(i)} = \frac{1}{n_i}\sum_{j\in \I_i} o_{0,j}$.\\
    \quad Output the long-term impact estimation as $\hat{\Delta} = \hat{v}_1 - \hat{v}_0$.
    \end{algorithmic}
\end{algorithm*}

The minimizer of $M_i$ given $z$ is similar to Eq.~\eqref{eqn:closed_form_M_stationary}, except that we subtract the exogenous part $z_t$ from the observation: 
\begin{align}\label{eqn:closed_form_M}
    &\argmin_{M_0,M_1} \mathcal{L}(M_0, M_1, z; \D_n) \notag\\ 
    :=& \left(\sum_{j\in \I_i} \sum_{t=1}^{T-1}(o_{j,t+1} - \hat{z}_{t+1})(o_{j,t} - \hat{z}_t)^\top\right)\left(\sum_{j\in \I_i}\sum_{t=1}^{T-1} (o_{j,t} - \hat{z}_{t})(o_{j,t} - \hat{z}_t)^\top\right)^{-1}.
\end{align}
The final optimization process is summarized in Algorithm~\ref{algo1}.

\subsection{Theoretical Analysis}

We give a preliminary theoretical analysis in this section to give readers some insights on how good our estimator is once a partial oracle information is given.
We will extend our analysis to quantify the error of the estimator at the convergence state of alternating minimization in future work.

To simplify our analysis, we first assume we get access to the true transition matrix $M_i$, and our goal is to quantify the error between $\hat{v}(\pi_i)$ and the true policy value $v(\pi_i)$ for each policy $\pi_i$.
\begin{pro}\label{pro:hatz_error}
Suppose we have bounded noise and matrices under Assumption~\ref{ass:bounded_noise} and Assumption~\ref{ass:bounded_matrices},
and suppose $n_0 = n_1 = \frac{n}{2}$ is equally divided.
When we get access of the oracle transition matrix $M_i = M_i^*,~i\in \{0,1\}$, let $\hat{z} = \argmin_z \mathcal{L}(M_0^*, M_1^*, z; \D_n)$.
If we plugin $\hat{z}$ in the estimation of $\hat{v}(\pi_i)$, we will have
$$
|\hat{v}(\pi_i) - v(\pi_i)| = \mathcal{O}(\frac{1}{\sqrt{n}}),
$$
with probability at least $1-\delta$.
\end{pro}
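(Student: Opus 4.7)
The plan is to decompose the estimation error into a Monte Carlo contribution from $\hat O_0^{(i)}$ and a reconstruction error in $\hat z_0$, then control each piece by a concentration argument that exploits i.i.d.\ individuals and the fact that $M_i^*$ is given exactly. Since the oracle $M_i^*$ is plugged into Eq.~\eqref{eq:closed_form_nonstationary},
\begin{equation*}
\hat v(\pi_i) - v(\pi_i) \;=\; \theta_r^\top (I-\gamma M_i^*)^{-1}\bigl(\hat O_0^{(i)} - \hat z_0 - \bar S_0^{(i)}\bigr),
\end{equation*}
and the bounded-matrix assumption makes the prefactor a constant-norm row vector, so it suffices to bound the bracket. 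Writing $o_{j,0} = s_{j,0} + z_0^*$ splits the bracket into a Monte Carlo term $\tfrac{1}{n_i}\sum_{j\in\I_i} s_{j,0} - \bar S_0^{(i)}$ and a reconstruction term $z_0^*-\hat z_0$. The first piece is an average of $n_i = n/2$ i.i.d.\ centered bounded vectors, so a vector Hoeffding or Bernstein inequality bounds it by $\mathcal{O}(1/\sqrt n)$ with probability at least $1-\delta/2$; the bulk of the argument is therefore on $\|z_0^*-\hat z_0\|$.

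Substituting $o_j = s_j + z^*$ into the closed form~\eqref{eqn:closed_form_z} with $M_i = M_i^*$ and noting that $\sum_{i,j} G_i^* z^* = (n_0 G_0^*+n_1 G_1^*)z^*$ cancels the $z^*$ piece gives the clean identity
\begin{equation*}
\hat z - z^* \;=\; (n_0 G_0^* + n_1 G_1^*)^{-1}\sum_{i=0}^{1}\sum_{j\in\I_i} G_i^* s_j, \qquad G_i^* = A_i^{*\top}A_i^*.
\end{equation*}
The key observation is that the $t$-th block of $A_i^* s_j$ equals $s_{j,t+1}-M_i^* s_{j,t}$, which by Eq.~\eqref{eqn:linear_state_transition} has zero conditional mean; hence $G_i^* s_j$ is an unconditionally zero-mean random vector, and across $j\in\I_i$ these summands are mutually independent and (by the bounded-noise assumption) bounded. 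A vector concentration inequality then yields $\|\sum_{i,j} G_i^* s_j\| = \mathcal{O}(\sqrt n)$ with probability at least $1-\delta/2$. Combining this with the non-degeneracy hypothesis $\lambda_{\min}\bigl((n_0 G_0^*+n_1 G_1^*)/n\bigr)\ge c>0$ converts the $\sqrt n$ numerator into $\|\hat z - z^*\|=\mathcal{O}(1/\sqrt n)$, and in particular $\|\hat z_0 - z_0^*\|=\mathcal{O}(1/\sqrt n)$. A union bound over the two high-probability events finishes the proof.

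The main obstacle is the vector concentration for $\sum_{i,j} G_i^* s_j$: while the summands are i.i.d.\ across $j$, the blocks of $A_i^* s_j$ within a single individual are correlated across time (the state $s_{j,t}$ feeds into the noise at step $t+1$), so one needs a careful vector Bernstein or matrix-Bernstein style bound that tracks how $\|G_i^*\|$ and the second moment of $G_i^* s_j$ scale with the horizon $T$. The spectral norm of the bidiagonal matrix $A_i^*$ can be controlled using the condition $\|M_i^*\|<1/\gamma$ from Proposition~\ref{pro:stationary_closed_form}, and this bookkeeping, together with the lower bound on $\lambda_{\min}(G_0^*+G_1^*)$, pins down the implicit constant but does not affect the $\mathcal{O}(1/\sqrt n)$ rate in $n$.
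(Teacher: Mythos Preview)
Your proposal is correct and follows essentially the same route as the paper: once you observe that $A_i^* s_j = \varepsilon_j$ (the per-individual residual vector), your identity $\hat z - z^* = (n_0 G_0^* + n_1 G_1^*)^{-1}\sum_{i,j} G_i^* s_j$ is exactly the paper's $\hat z - z^* = (\Lambda_n^*)^{-1}\sum_{i,j} A_i^{*\top}\varepsilon_j$ (with $\lambda_z=0$), and both finish by vector concentration over the i.i.d.\ individuals together with Assumption~\ref{ass:bounded_matrices}. Two small differences are worth noting. First, you additionally track the Monte Carlo error $\hat O_0^{(i)}-\bar O_0^{(i)}$, which the paper's proof silently drops. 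Second, the ``main obstacle'' you flag largely evaporates in the paper's version: since $A_i^{*}$ is deterministic, the paper factors it out and applies concentration directly to $\tfrac{1}{n}\sum_{j\in\I_i}\varepsilon_j$, where each $\varepsilon_j$ is a single bounded vector by Assumption~\ref{ass:bounded_noise}; the within-individual time correlations are already packaged inside $\varepsilon_j$ and play no role once you sum i.i.d.\ over $j$.
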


In the second analysis we assume that we get an accurate $z$. 
In this case, the estimation of $\hat{M}$ reduces to the stationary assumption case in Assumption~\ref{ass:linear_dynamics} where the hidden state variable $s_t = o_t - z_t$ is fully recovered.
We follow the analysis~\citep[e.g.,][]{duan2020minimax, miyaguchi21} of linear MDP to characterize the error.

\begin{pro}[Proposition 11 in \citet{miyaguchi21}]\label{pro:hatM_error}
Suppose we get access to the oracle exogenous noise $z$ during the experimental period, let $\hat{M_i} = \argmin_{M_i} \mathcal{L}(\{M_i\}, z^*; \D_n)$ in Eq.~\eqref{eqn:closed_form_M}.
Under the assumption in Proposition 11 in \citet{miyaguchi21}, with the plugin estimator $\hat{v}$ with $\hat{M_i}$, we have:
$$
|\hat{v}(\pi_i) - v(\pi_i)| = \mathcal{O}(n^{-\frac{1}{2d+2}}),
$$
with probability at least $1-\delta$.
\end{pro}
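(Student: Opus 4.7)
The plan is to reduce Proposition~\ref{pro:hatM_error} to the stationary linear dynamics case, since by assumption we are given the true exogenous sequence $z^*$. First, I would define the de-noised trajectories $\tilde{s}_{j,t} := o_{j,t} - z_t^*$ for every individual $j$ and every $t \in \{0,\ldots,T\}$. Under Assumption~\ref{ass:nonstationary}, these satisfy $\E[\tilde{s}_{j,t+1}\mid \tilde{s}_{j,t}, D=i] = M_i^* \tilde{s}_{j,t}$, so the $\tilde{s}_{j,t}$'s obey a stationary linear Markov kernel with the true transition matrix $M_i^*$. The least-squares estimator $\hat{M}_i$ in Eq.~\eqref{eqn:closed_form_M}, evaluated at $z=z^*$, then coincides exactly with the stationary OLS estimator in Eq.~\eqref{eqn:closed_form_M_stationary} applied to $\{\tilde{s}_{j,t}\}$. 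This places us squarely in the linear MDP setting of \citet{parr2008analysis,miyaguchi21}.

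Second, I would decompose the value error. Using Eq.~\eqref{eq:closed_form_nonstationary} (dropping the $v(z_\infty)$ constant as the excerpt does), the plug-in policy value is $\hat{v}(\pi_i) = \theta_r^\top (I-\gamma \hat{M}_i)^{-1}(\hat{O}_0^{(i)} - z_0^*)$, whereas the ground truth is $v(\pi_i) = \theta_r^\top (I - \gamma M_i^*)^{-1} \bar{S}_0^{(i)}$. Splitting the error into a transition-matrix term and an initial-state term via the resolvent identity
\begin{equation*}
(I-\gamma \hat{M}_i)^{-1} - (I-\gamma M_i^*)^{-1} = \gamma (I-\gamma \hat{M}_i)^{-1}(\hat{M}_i - M_i^*)(I-\gamma M_i^*)^{-1},
\end{equation*}
the initial-state contribution is $\mathcal{O}(n^{-1/2})$ by a Hoeffding/Chebyshev argument on the Monte Carlo average $\hat{O}_0^{(i)} - z_0^*$, which dominates nothing; the slower $n^{-1/(2d+2)}$ rate comes entirely from $\|\hat{M}_i - M_i^*\|$ combined with the stability factor $(I-\gamma M_i^*)^{-1}$, which is bounded by the spectral-norm hypothesis $\|M_i^*\|<1/\gamma$.

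Third, I would invoke Proposition~11 of \citet{miyaguchi21} directly to bound the plug-in value error in the stationary linear MDP defined by $\{\tilde{s}_{j,t}\}$. That proposition already packages the least-squares concentration of $\hat{M}_i$, the resolvent perturbation, and the plug-in evaluation into a single $\mathcal{O}(n^{-1/(2d+2)})$ high-probability bound, so applying it to each policy group $i \in \{0,1\}$ and combining with the initial-state term completes the proof.

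The main obstacle is a bookkeeping one: checking that the regularity assumptions required by Proposition~11 of \citet{miyaguchi21} genuinely carry over to the reduced trajectories. In particular, I would need (i) boundedness of $\tilde{s}_{j,t}$, which follows from Assumptions~\ref{ass:bounded_noise}--\ref{ass:bounded_matrices} via $\|\tilde{s}_{j,t}\| \le \|o_{j,t}\| + \|z_t^*\|$; (ii) a minimum-eigenvalue/coverage condition on the empirical design matrix $\sum_{j\in\I_i}\sum_{t} \tilde{s}_{j,t}\tilde{s}_{j,t}^\top$, which in a randomized experiment reduces to a non-degeneracy assumption on the stationary covariance of the endogenous process; and (iii) matching the sample-complexity convention, since Miyaguchi's rate is stated in terms of an i.i.d.~or single-trajectory sample size while our effective sample size is $n_i T$. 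Once these translations are made explicit in an appendix lemma, the $\mathcal{O}(n^{-1/(2d+2)})$ bound is obtained by a direct citation.
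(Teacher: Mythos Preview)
Your proposal is correct and follows essentially the same approach as the paper: define $s_{j,t}=o_{j,t}-z_t^*$ to reduce to the stationary linear MDP setting, then invoke Proposition~11 of \citet{miyaguchi21}. In fact you supply considerably more detail than the paper does---the paper's own proof is the two-line reduction plus a direct citation, without the resolvent decomposition or the assumption bookkeeping you spell out.
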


\subsection{Practical Considerations}

\paragraph{Regularize the Transition Dynamic Matrices.}
Degenerated case may happen during the alternating minimization when either 1) the spectral norm is too large, i.e. $\|M_i\|_2\geq \frac{1}{\gamma}$, leading the long-term operator $(I-\gamma M_i)^{-1} = \sum_{t=0}^\infty \gamma^t M_i^t$ diverges in Eq.~\eqref{eq:closed_form_nonstationary},
or 2) the matrix inversion calculation of $M_i$ in Eq.~\eqref{eqn:closed_form_M} is not well-defined.
To avoid those scenarios and stabilize the computation procedure, we add a regularization term of $M_i$ as $\lambda_i \|M_i - I_d\|_2^2$ in our experiment. 
The intuition is that the transition matrix should be close to identity matrix as in practice the treatment policy typically deviates from the control policy in an incremental manner.

After adding the regularization, the closed-form minimizer of $M_i$ of the regularized loss function becomes:
$$
M_i = \left(\!\lambda_i I_d + \sum_{j\in \I_i}\! \sum_{t=1}^{T-1}(o_{j,t+1} - z_{t+1})(o_{j,t} - z_t)^\top\!\right)\!\!\left(\!\lambda_i I_d + \sum_{j\in \I_i}\!\sum_{t=1}^{T-1} (o_{j,t} - z_{t})(o_{j,t} - z_t)^\top\!\right)^{-1}\!\!\!\!.
$$

\paragraph{Regularize the Exogenous Variable.}

There is a challenge in deriving the closed-form $z$ in Eq.~\eqref{eqn:closed_form_z} where $n_0 G_0 + n_1 G_1$ can be degenerated or nearly degenerated.
By definition, $G_i$ is always singular. 
Moreover, if there is no control of the minimal eigenvalue of $(n_0 G_0 + n_1 G_1)$, e.g. close to zero, the update step on $z$ is uncontrolled and the variance of noise can be magnified in the direction of the minimal eigenvector.
Therefore it is crucial to regularize $z$.

To tackle the possible degenerated circumstances, one natural idea is to include regularization of the $\ell_2$ norm of $z$, where the regularized loss function can be written as:
\begin{equation} \label{eqn:regularized_loss}
    \mathcal{L}_\lambda( z, M_0, M_1;\D) = \mathcal{L}( z, M_0, M_1;\D) + \lambda_z \|z\|_2^2.
\end{equation}
Its corresponding minimizer of $\hat{ z}$ can be written as:
$$
    \hat{ z} = (\lambda I + n_0 G_0 + n_1 G_1)^{-1} (\sum_{i=0}^1 \sum_{j\in\I_i} G_i o_j),
$$
where $I$ is the identity matrix of dimension $d\times (T+1)$.
It is worth mentioning that when the regularization parameter $\lambda$ increases to infinity, $z$ will go to $0$, and the solution reduces to the stationary case in Assumption~\ref{ass:linear_dynamics}.

\paragraph{Extend to Multiple Treatment Policies}
The optimization framework can be easily extend to multiple treatment policies case. 
Suppose we have $k$ different treatment policies $\pi_1, \pi_2,\cdots, \pi_k$ and let $\pi_0$ be the control policy,
the closed form solution for $\hat{z}$ under multiple dataset of different treatment groups can be derived as
$$
    \hat{ z}_\lambda = ( \lambda I + \sum_{i=0}^k n_i G_i)^{-1} (\sum_{i=0}^k \sum_{j\in\I_i} G_i o_j).
$$
And the closed-form update for $M_i$ stays the same.
The final estimation of the treatment effect for policy $\pi_i$ is $\hat{\Delta} = \hat{v}_i - \hat{v}_0$.

\section{Related Work}

\paragraph{Estimating long-term treatment effects}
The surrogate index method~\citep{athey2019surrogate, athey2020combining} shares a similar goal of estimating treatment effects beyond the experiment period. 
They make a different assumption that the long-term effect is independent of the treatment conditioned on the surrogate index measured during the experiment. More importantly, this method does not estimate long-term impacts resulting from long-term treatment exposure (our focus in this work), but rather from short-term exposure during the experiment.

Our method draws inspirations from off-policy evaluation(OPE) and related areas, whose goal is to estimate the long-term policy value, usually from a offline dataset collected under different policies.
Most early work focuses on the family of inverse propensity score estimators that are prone to high variance in long-horizon problems~\citep[e.g.,][]{precup00eligibility,murphy01marginal,jiang16doubly}. Recently, there are growing interests in long- and even infinite-horizon settings~\citep{Liu18Breaking,NachumCD019, xie2019towards, tang20doubly, uehara2020minimax, DaiNC0SS20, chandak2021universal}.
These methods either rely on the stationarity assumption that is violated in many applications, or consider the general nonstationary Markov decision process~\citep{kallus20double} that does not leverage domain-specific assumptions.

\paragraph{RL in nonstationary or confounded environments}

Our model is a special case of
Partially Observable Markov Decision Process (POMDP)~\citep{aastrom1965optimal,kaelbling98planning}. 
OPE in general POMDPs remains challenging,
unless various assumptions are made~\citep[e.g.,][]{tennenholtz20off, bennet21off, shi22minimax}.
Most assumptions are on the causal relation of the logged data, such as relation between state, action and confounded variable. In contrast,
we make an assumption motivated by real-world data, which allows our estimator to cancel out exogenous variables from observations.

Our assumption is also related to MDP with Exogenous Variables~\citep[e.g.,][]{dietterich2018discovering, chitnis2020learning},
and Dynamics Parameter MDP (DPMDP) or Hidden Paramter MDP (HiP-MDP)~\citep{al2017continuous, xie2020deep}.
For exogenous variable, they assume observation features can be partitioned into two groups, where the exogenous group is not affected by the action and the endogenous group evolve as in a typical MDP. 
The major challenge is infer the right partition.
Several recent works~\citep[e.g][]{misra2020kinematic, du2019provably, efroni2021provable} combine exogenous variable with rich observation in RL.
This is different from our assumption where we assume the observation is a \emph{sum} of both parts, which is a more natural assumption in applications like e-commerce.
For DPMDP and Hip-MDP, they assume a meta task variable which is non-stationary and changed across time but the task variable dynamic can be captured by a sequential model. 
Our assumption can be viewed as a linear special case but our focus is not to better characterize the system but is to remove the exogenous part for better predictions.

\begin{figure*}
    \centering
    \begin{tabular}{ccccc}
    \raisebox{2em}{\rotatebox{90}{\scriptsize Log MSE}} 
        \includegraphics[width = 0.22\textwidth]{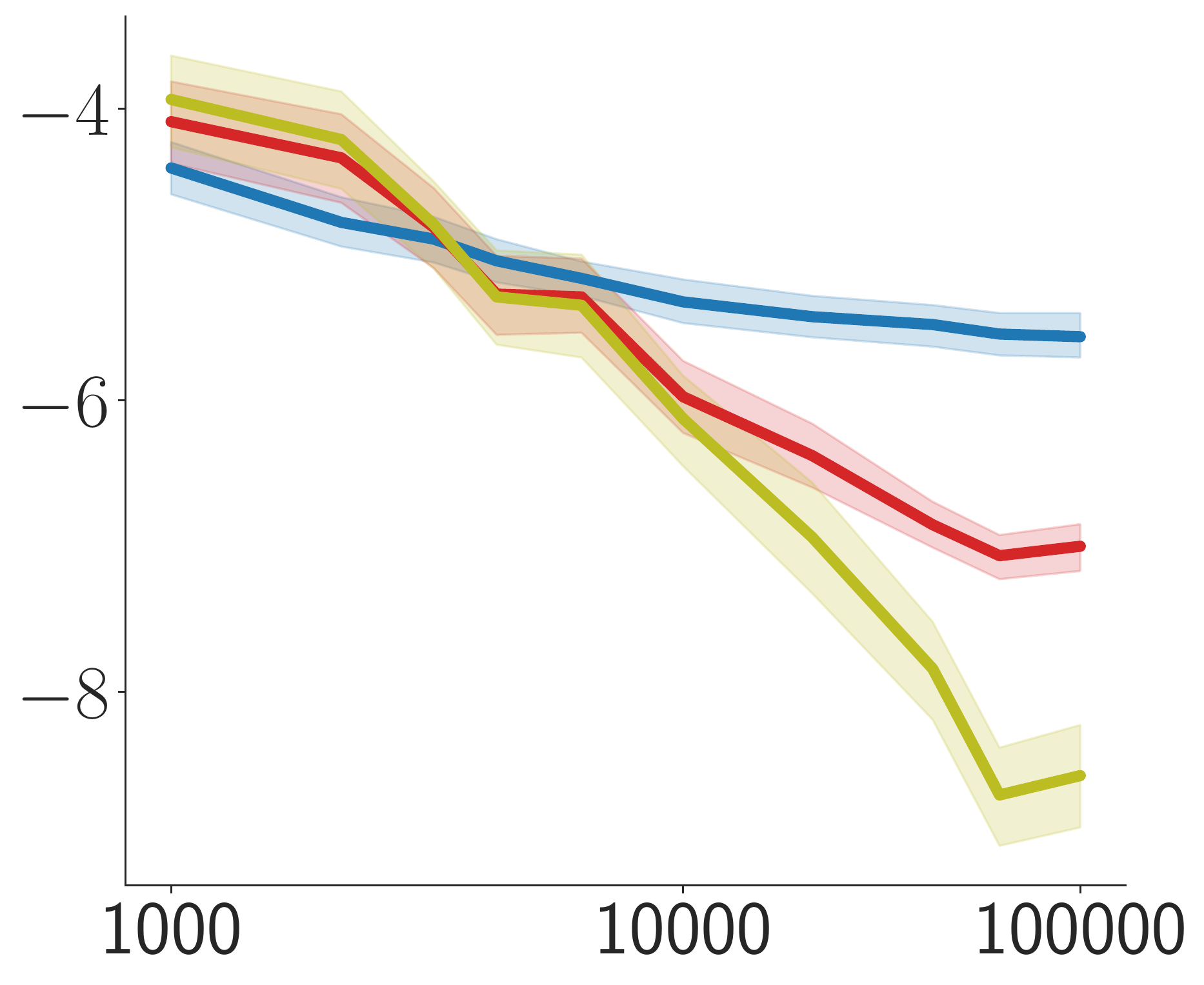} &  
        \includegraphics[width = 0.22\textwidth]{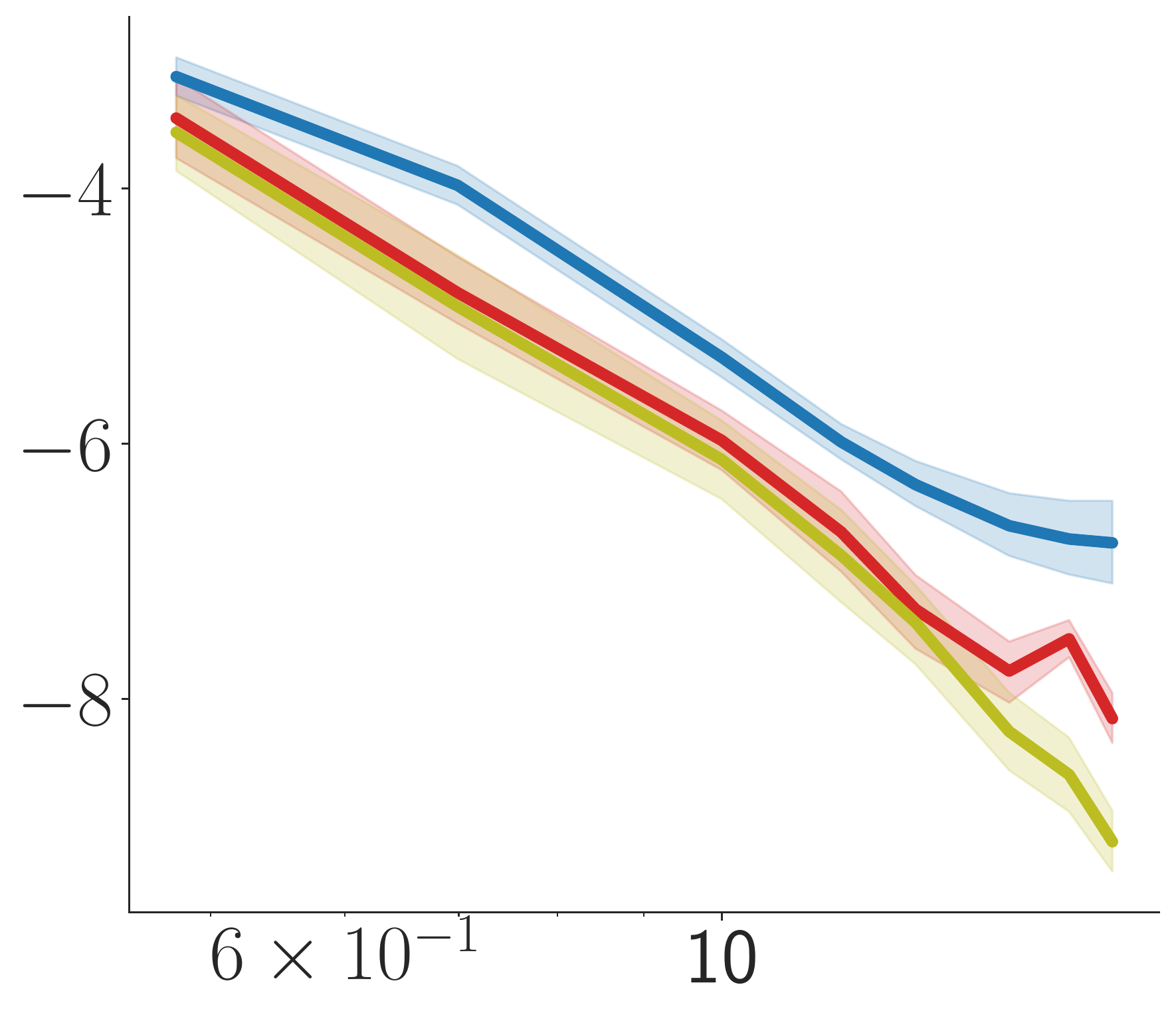} & 
        \includegraphics[width = 0.22\textwidth]{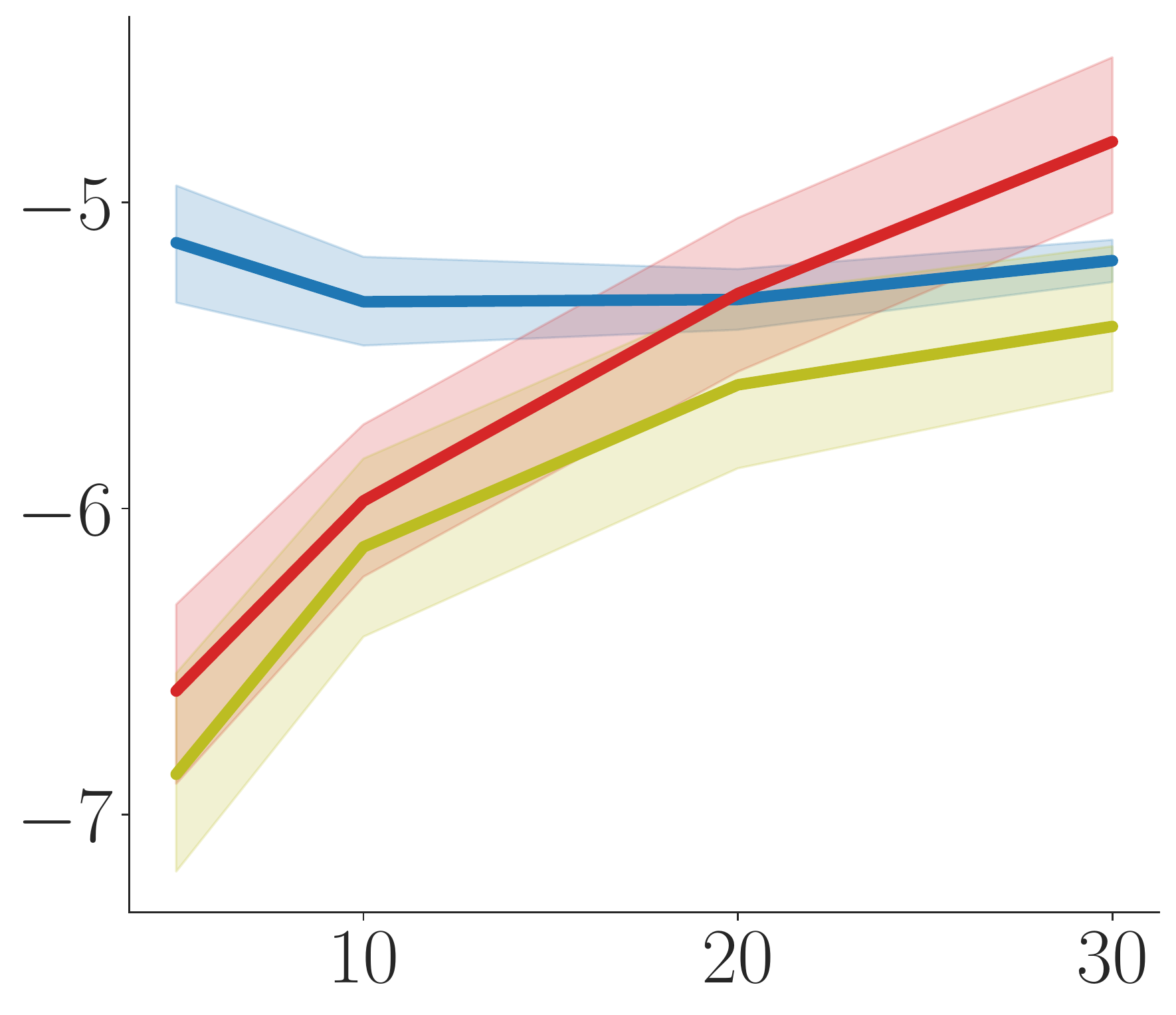} & 
        \includegraphics[width = 0.22\textwidth]{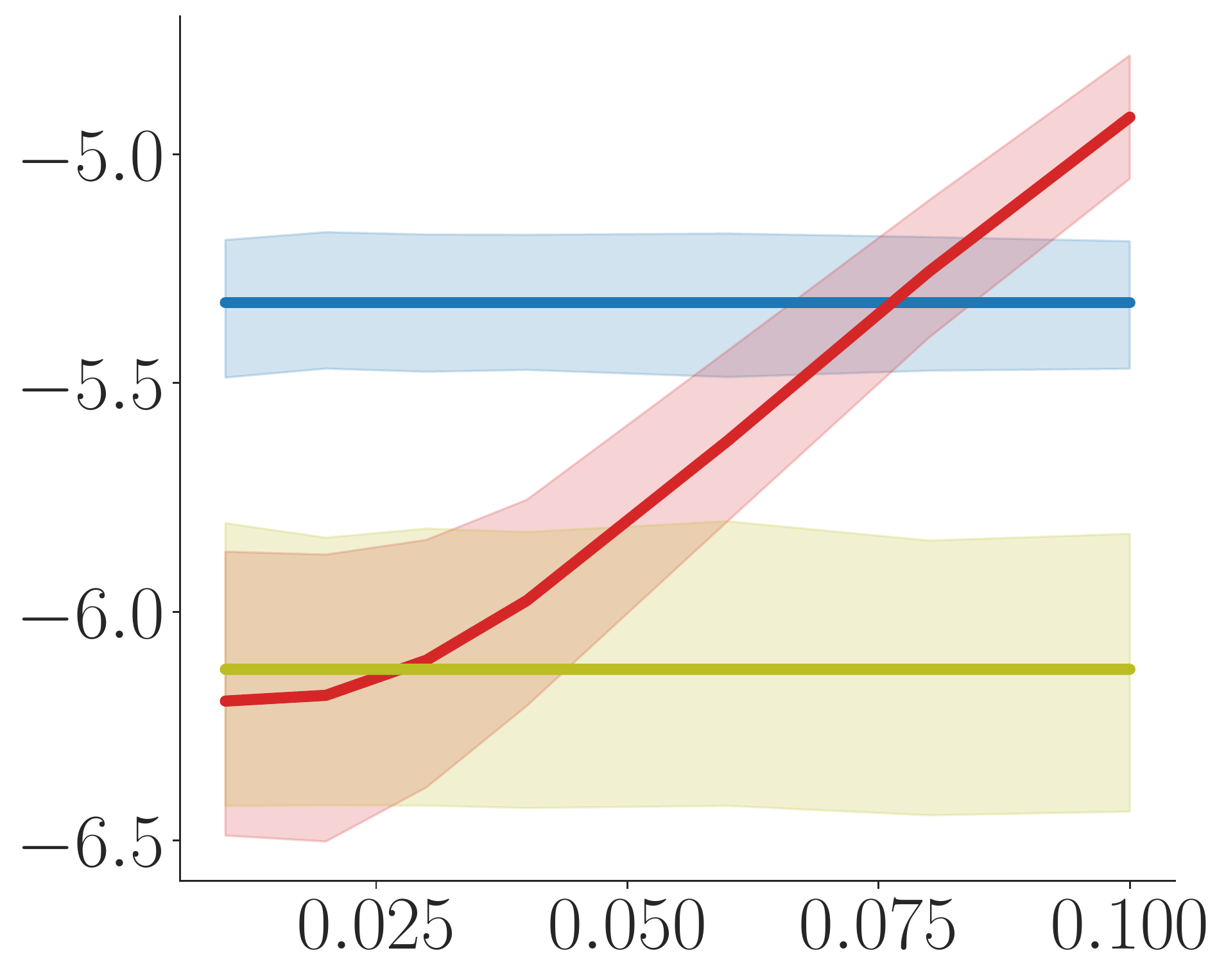} &
        \!\!\!\!\!\!\!\raisebox{4em}{\includegraphics[width = 0.16\textwidth]{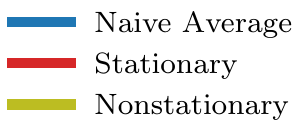}}
        \\
        \small{number of trajectories $n$} & \small{horizon $T$} & \small{dimension $d$} & noise scale $\alpha$ & \\
        (a) & (b) & (c) & (d) &\\
    \end{tabular}
    \caption{Results of Synthetic Environment. We vary parameters in the simulation to compare the logarithmic MSE of various estimators: (a) number of trajectories; (b) horizon; (c) observation feature dimension; (d) scale of the exogeneous noise.}
    \label{fig:synthetic}
\end{figure*}
\section{Experiments}
\label{sec:exp}
We evaluate our methods in three problems: a synthetic dataset, a dataset from the Type-1 Diabete RL simulator~\citep{simglucose}, and a real-world dataset from an online store.
The ground truth $\Delta$ is computed either from a true simulator or using the average of the real experimental data under a long time period.
We compare our methods based on plug-in estimator of the stationary solution in Eq.~\eqref{eq:closed_form}, its non-stationary variant in Algorithm~\ref{algo1},
and an \textit{Naive Average} baseline. The baseline directly uses the short-term reward average as the estimate of the long-term effect.

\subsection{Synthetic Simulation}

The synthetic environment generates 4 randomized matrix $M_i$ for policies $\{\pi_i\}_{i=0}^3$ and a trajectory of randomized exogenous noise $\{z_t\}_{t=0}^T$. See details of the synthetic dynamic in Appendix~\ref{appendix:experiments}.
The randomized sequence follows the non-stationary dynamics
with a parameter $\alpha$ controlling the scale of the exogenous noise: $o_{j,t} = s_{j,t} + \alpha z_t,~\forall j,t$.
We collect $n$ trajectories for each policy until $t=T$ (w/ varying $T$).
We vary the parameters of the generating sequences: number $n$ of trajectories, horizon $T$, data dimension $d$, and scale $\alpha$ of the exogenous noise. 
We plot the logarithmic Mean Square Error~(MSE) for each method in Figure~\ref{fig:synthetic}.
The result shows that our estimator method (the green line) clearly outperforms all other baselines.
Moreover, Figure~\ref{fig:synthetic}(d) shows the increase of the scale of the exogenous noise does not affect estimation accuracy of our method.

\subsection{Type-1 Diabete Simulator}
This environment is modified based on an open-source implementation\footnote{https://github.com/jxx123/simglucose} of the FDA approved Type-1 Diabetes simulator (T1DMS)~\citep{man2014uva}. 
The environment simulates two-day behavior in an in-silico patient’s life. 
Consumption of a meal increases the blood-glucose level in the body. If the level is too high, the patient suffers from hyperglycemia. If the level is too low, the patient suffers from hypoglycemia. 
The goal is to control the blood glucose level by regulating the insulin dosage to minimize the risk associated with both hyperglycemia and hypoglycemia.

We modify the Bagal and Bolus~(BB) policy~\citep{bastani2014model} (control policy) in the codebase and set two glucose target levels and different noise levels as our two treatment policies.
We collect information in the first 12-hour of all the three policies with $5000$ randomized patients in each policy group and use those information to predict the long-term effect. 
The observation feature is 2-dimensional: glucose level (CGM) and the amount of insulin injection.
The non-stationarity comes from the time and amount of the consumption of a meal, which is time varying, but otherwise shared by all patients.
We average a 2-day simulation window over random $250,000$ patients as ground truth treatment effect between policy groups.

Similar to the synthetic simulator, we vary the number of patients and the experimental period.
Figure~\ref{fig:simglucose} shows that the non-stationary method performs better in the prediction accuracy compared to stationary method in both predictions of CGM and the amount of insulin injection.
Even though the simulator is non-linear, our simple linear additive exogenous noise assumption still captures the small local changes well, which is approximately linear.

\begin{figure*}
    \centering
    \begin{tabular}{ccccc}
        \includegraphics[width = 0.22\textwidth]{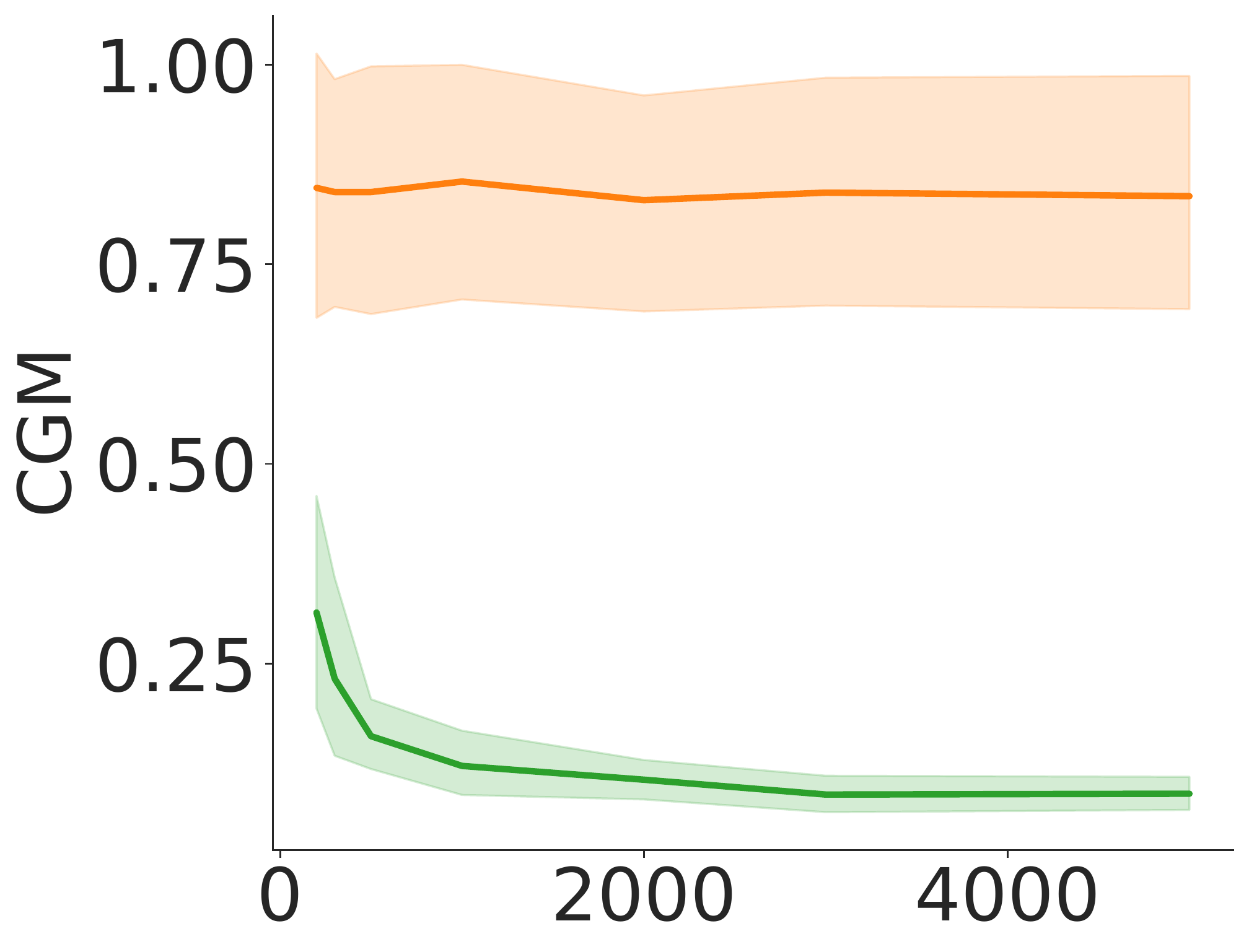} &  
        \includegraphics[width = 0.22\textwidth]{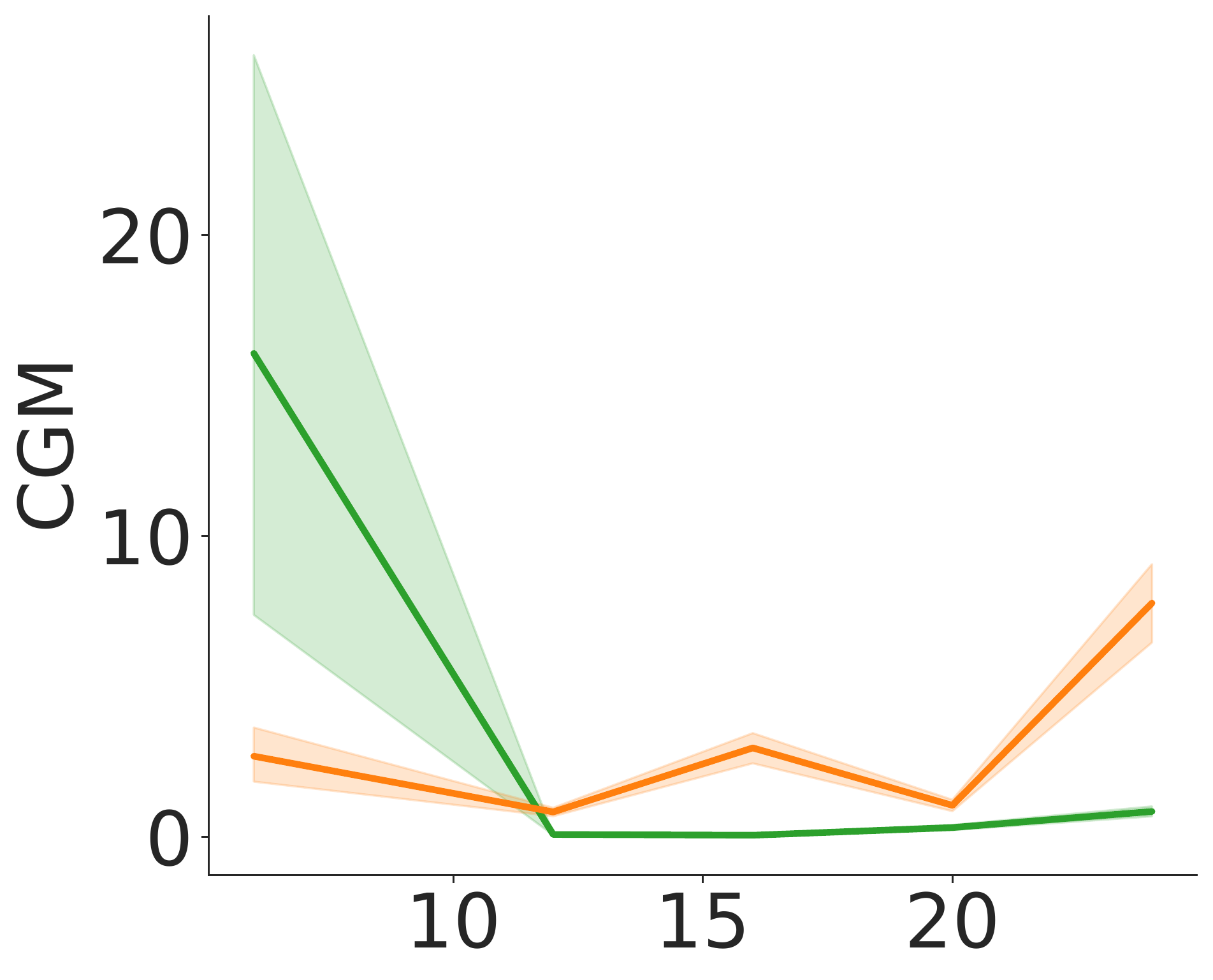} &
        \includegraphics[width = 0.22\textwidth]{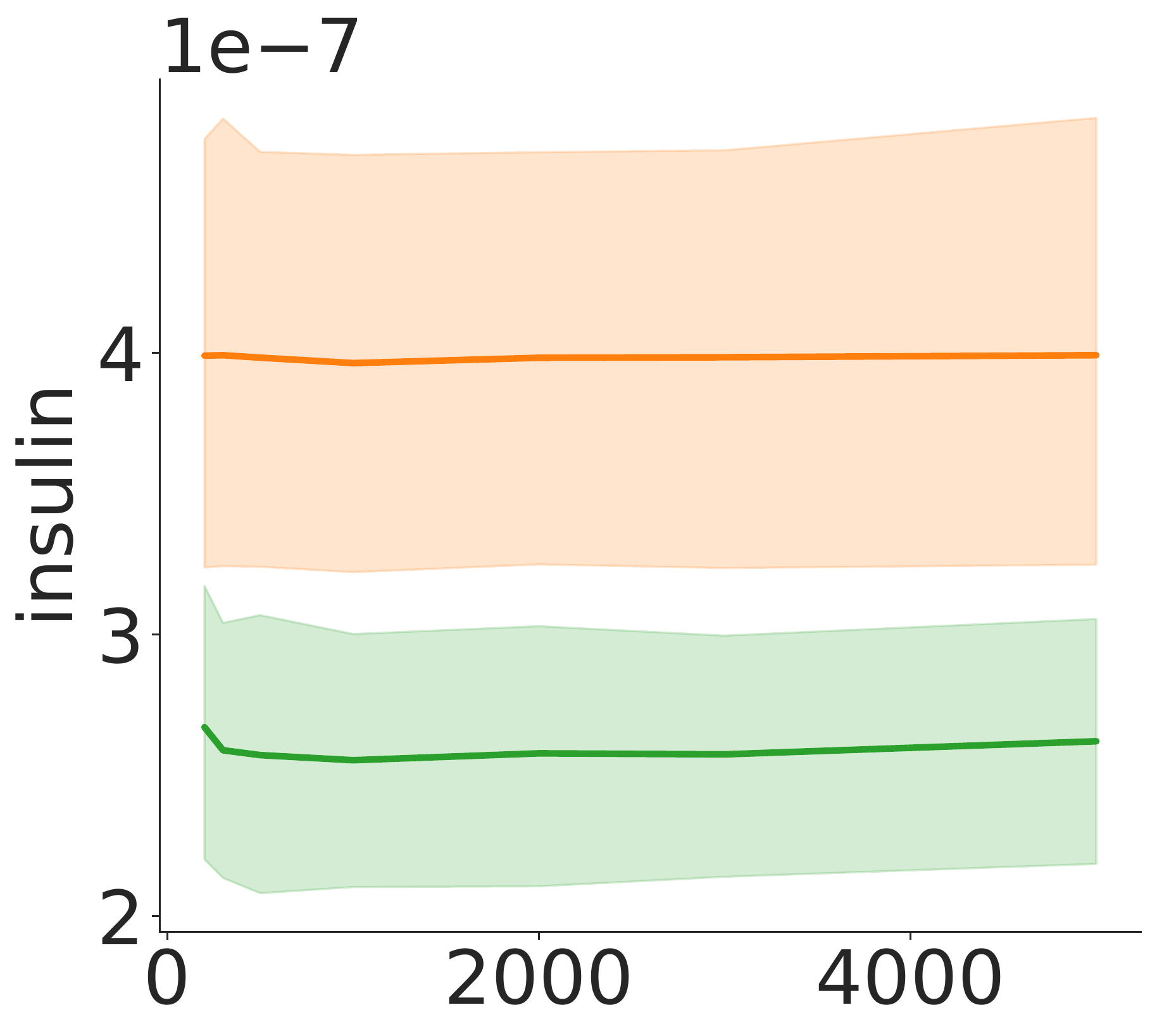} & 
        \includegraphics[width = 0.22\textwidth]{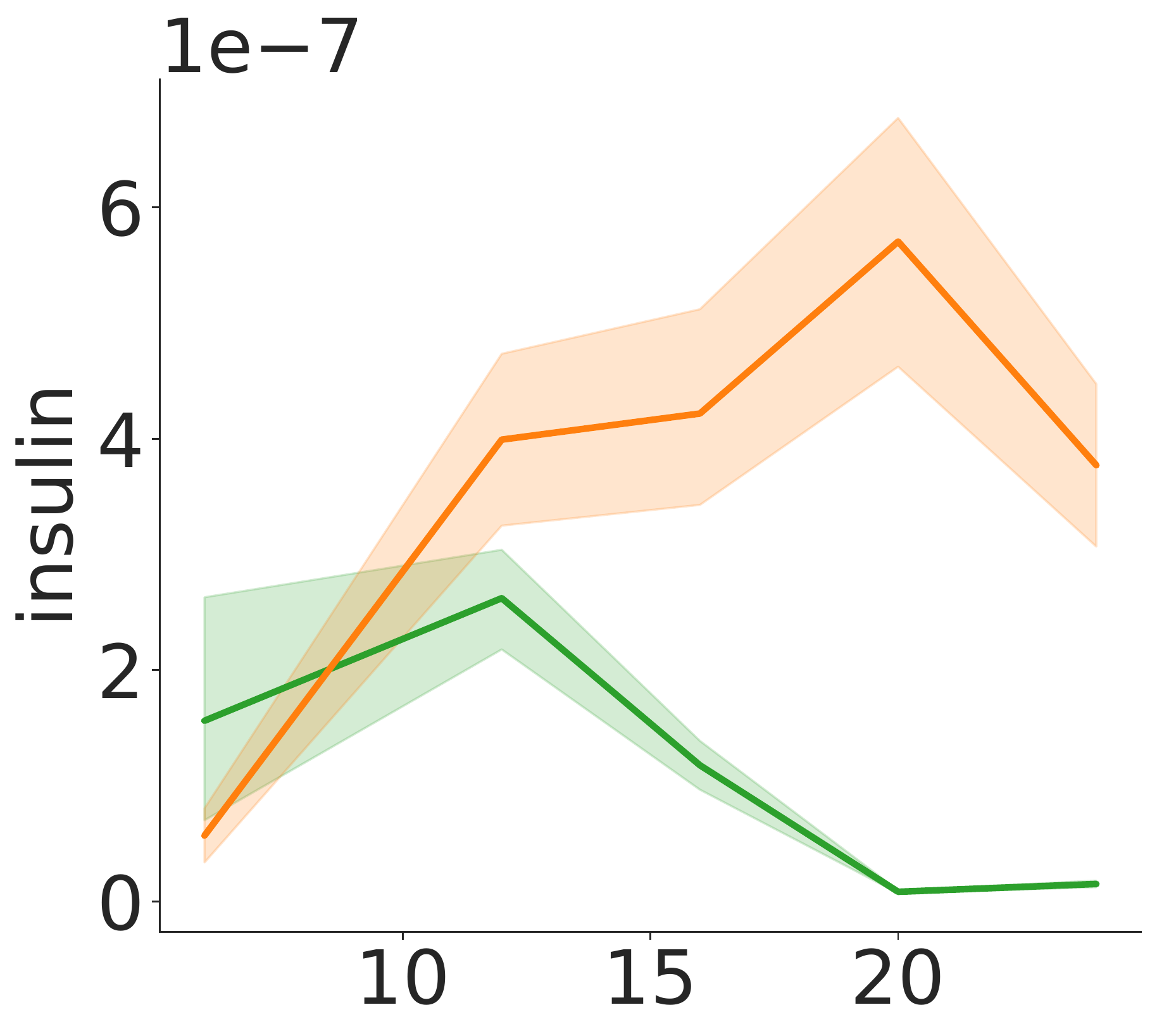} &
        \!\!\!\!\raisebox{4em}{\includegraphics[width = 0.15\textwidth]{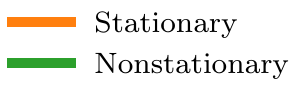}}
        \\
        \small{number of patients $n$} & \small{exp horizon $T$} & \small{number of patients $n$} & \small{exp horizon $T$} & \\
        (a) & (b) & (c) & (d) &\\
    \end{tabular}
    \caption{Results of Type-1 Diabete Environment. We vary two parameters in the simulation, the number of patients and the in-experiment horizon, to compare the performance for different methods under two different evaluation metrics.}
    \label{fig:simglucose}
\end{figure*}

\subsection{Data from an Online Store}

\begin{table}[ht]
    \centering
    \begin{tabular}{c||c|c|c|c}
         & Metric 1 & Metric 2 & Metric 3 & Metric 4 \\
         \hline 
         \hline
        \texttt{Naive Average} & 133.61\% & 66.09\% & 106.85\% & \textbf{37.99}\% \\
        \hline 
        \texttt{Stationary} & 174.77\% & \textbf{48.03}\% & 98.31\% & 110.58\% \\
        \hline 
        \texttt{Non-stationary} & \textbf{64.68}\% & 62.00\% & \textbf{48.04}\% & 67.57\% 
    \end{tabular}
    \caption{
    Results in the online store dataset.
    The reported numbers are the median of MAPE over $7$ different treatment policies. Columns correspond to business metrics of interest.}
    \label{tab:weblab}
\end{table}

We test our methods under $4$ long-running experiments in an online store with a total of $7$ different treatment policies (some experiments have more than $1$ treatment). Each experiment has $1$ control policy.
We evaluate 4 business metrics related to customer purchases in the store (Metrics 1-4), 
and use $d=17$ features. 
All the experiments lasted for $12$ weeks. We treat the first $5$ weeks as the experiment window, and use data in those weeks to estimate long-term impacts of the 4 metrics. The trailing $7$-week average of the metrics are used as ground true to evaluate accuracy of various estimators.
Table~\ref{tab:weblab} reports the median of the Mean Absolute Percentage Error~(MAPE) of the estimators; See full results in Appendix~\ref{appendix:experiments}.

Given the high cost in such long-running experiments, we cannot collect more data points for comparison, and for computing statistical significance. That said, there is good evidence from the reported number that our method produces better predictions of long-term treatment effects than Naive Average. Furthermore, our method improves on the stationary baseline, suggesting the practical relevance of our nonstationary assumtion, and effectiveness of the proposed estimator.

\section{Conclusions}
In this paper we study how to estimate the long-term treatment effect by using only the in-experimental data in the non-stationary environment.
We propose a novel non-stationary RL model and an algorithm to make prediction.
A major limitation is the linear assumption in both the dynamics model and the additive exogenous part.
Once the real world model includes a highly non-linear part, the prediction can be biased.
Future direction includes further relax our model to non-linear case to better capture the real world environment.

\clearpage
\bibliography{references}

\begin{thebibliography}{36}
\providecommand{\natexlab}[1]{#1}
\providecommand{\url}[1]{\texttt{#1}}
\expandafter\ifx\csname urlstyle\endcsname\relax
  \providecommand{\doi}[1]{doi: #1}\else
  \providecommand{\doi}{doi: \begingroup \urlstyle{rm}\Url}\fi

\bibitem[Al-Shedivat et~al.(2017)Al-Shedivat, Bansal, Burda, Sutskever,
  Mordatch, and Abbeel]{al2017continuous}
Maruan Al-Shedivat, Trapit Bansal, Yuri Burda, Ilya Sutskever, Igor Mordatch,
  and Pieter Abbeel.
\newblock Continuous adaptation via meta-learning in nonstationary and
  competitive environments.
\newblock \emph{arXiv preprint arXiv:1710.03641}, 2017.

\bibitem[{\AA}str{\"o}m(1965)]{aastrom1965optimal}
Karl~Johan {\AA}str{\"o}m.
\newblock Optimal control of markov processes with incomplete state
  information.
\newblock \emph{Journal of mathematical analysis and applications}, 10\penalty0
  (1):\penalty0 174--205, 1965.

\bibitem[Athey et~al.(2019)Athey, Chetty, Imbens, and Kang]{athey2019surrogate}
Susan Athey, Raj Chetty, Guido~W Imbens, and Hyunseung Kang.
\newblock The surrogate index: Combining short-term proxies to estimate
  long-term treatment effects more rapidly and precisely.
\newblock Technical report, National Bureau of Economic Research, 2019.

\bibitem[Athey et~al.(2020)Athey, Chetty, and Imbens]{athey2020combining}
Susan Athey, Raj Chetty, and Guido Imbens.
\newblock Combining experimental and observational data to estimate treatment
  effects on long term outcomes.
\newblock \emph{arXiv preprint arXiv:2006.09676}, 2020.

\bibitem[Bastani(2014)]{bastani2014model}
Meysam Bastani.
\newblock \emph{Model-free intelligent diabetes management using machine
  learning}.
\newblock PhD thesis, University of Alberta, 2014.

\bibitem[Bennett et~al.(2021)Bennett, Kallus, Li, and Mousavi]{bennet21off}
Andrew Bennett, Nathan Kallus, Lihong Li, and Ali Mousavi.
\newblock Off-policy evaluation in infinite-horizon reinforcement learning with
  latent confounders.
\newblock In \emph{The 24th International Conference on Artificial Intelligence
  and Statistics, {AISTATS} 2021, April 13-15, 2021, Virtual Event}, volume 130
  of \emph{Proceedings of Machine Learning Research}, pp.\  1999--2007. {PMLR},
  2021.

\bibitem[Chandak et~al.(2021)Chandak, Niekum, da~Silva, Learned{-}Miller,
  Brunskill, and Thomas]{chandak2021universal}
Yash Chandak, Scott Niekum, Bruno~C. da~Silva, Erik~G. Learned{-}Miller, Emma
  Brunskill, and Philip~S. Thomas.
\newblock Universal off-policy evaluation.
\newblock In \emph{Advances in Neural Information Processing Systems 34: Annual
  Conference on Neural Information Processing Systems 2021, NeurIPS 2021,
  December 6-14, 2021, virtual}, pp.\  27475--27490, 2021.

\bibitem[Chitnis \& Lozano-P{\'e}rez(2020)Chitnis and
  Lozano-P{\'e}rez]{chitnis2020learning}
Rohan Chitnis and Tom{\'a}s Lozano-P{\'e}rez.
\newblock Learning compact models for planning with exogenous processes.
\newblock In \emph{Conference on Robot Learning}, pp.\  813--822. PMLR, 2020.

\bibitem[Dai et~al.(2020)Dai, Nachum, Chow, Li, Szepesv{\'{a}}ri, and
  Schuurmans]{DaiNC0SS20}
Bo~Dai, Ofir Nachum, Yinlam Chow, Lihong Li, Csaba Szepesv{\'{a}}ri, and Dale
  Schuurmans.
\newblock {CoinDICE}: Off-policy confidence interval estimation.
\newblock In \emph{Advances in Neural Information Processing Systems 33: Annual
  Conference on Neural Information Processing Systems 2020, NeurIPS 2020,
  December 6-12, 2020, virtual}, 2020.

\bibitem[Dietterich et~al.(2018)Dietterich, Trimponias, and
  Chen]{dietterich2018discovering}
Thomas Dietterich, George Trimponias, and Zhitang Chen.
\newblock Discovering and removing exogenous state variables and rewards for
  reinforcement learning.
\newblock In \emph{International Conference on Machine Learning}, pp.\
  1262--1270. PMLR, 2018.

\bibitem[Du et~al.(2019)Du, Krishnamurthy, Jiang, Agarwal, Dudik, and
  Langford]{du2019provably}
Simon Du, Akshay Krishnamurthy, Nan Jiang, Alekh Agarwal, Miroslav Dudik, and
  John Langford.
\newblock Provably efficient {RL} with rich observations via latent state
  decoding.
\newblock In \emph{International Conference on Machine Learning}, pp.\
  1665--1674. PMLR, 2019.

\bibitem[Duan et~al.(2020)Duan, Jia, and Wang]{duan2020minimax}
Yaqi Duan, Zeyu Jia, and Mengdi Wang.
\newblock Minimax-optimal off-policy evaluation with linear function
  approximation.
\newblock In \emph{International Conference on Machine Learning}, pp.\
  2701--2709. PMLR, 2020.

\bibitem[Efroni et~al.(2021)Efroni, Misra, Krishnamurthy, Agarwal, and
  Langford]{efroni2021provable}
Yonathan Efroni, Dipendra Misra, Akshay Krishnamurthy, Alekh Agarwal, and John
  Langford.
\newblock Provable rl with exogenous distractors via multistep inverse
  dynamics.
\newblock \emph{arXiv preprint arXiv:2110.08847}, 2021.

\bibitem[Jiang \& Li(2016)Jiang and Li]{jiang16doubly}
Nan Jiang and Lihong Li.
\newblock Doubly robust off-policy evaluation for reinforcement learning.
\newblock In \emph{Proceedings of the 33rd International Conference on Machine
  Learning}, pp.\  652--661, 2016.

\bibitem[Jin et~al.(2019)Jin, Netrapalli, Ge, Kakade, and Jordan]{jin2019short}
Chi Jin, Praneeth Netrapalli, Rong Ge, Sham~M Kakade, and Michael~I Jordan.
\newblock A short note on concentration inequalities for random vectors with
  subgaussian norm.
\newblock \emph{arXiv preprint arXiv:1902.03736}, 2019.

\bibitem[Kaelbling et~al.(1998)Kaelbling, Littman, and
  Cassandra]{kaelbling98planning}
Leslie~Pack Kaelbling, Michael~L. Littman, and Anthony~R. Cassandra.
\newblock Planning and acting in partially observable stochastic domains.
\newblock \emph{Artificial Intelligence}, 101\penalty0 (1--2):\penalty0
  99--134, 1998.

\bibitem[Kallus \& Uehara(2020)Kallus and Uehara]{kallus20double}
Nathan Kallus and Masatoshi Uehara.
\newblock Double reinforcement learning for efficient and robust off-policy
  evaluation.
\newblock In \emph{Proceedings of the 37th International Conference on Machine
  Learning}, pp.\  5078--5088, 2020.

\bibitem[Kohavi et~al.(2020)Kohavi, Tang, and Xu]{Kohavi20Trustworthy}
Ron Kohavi, Diane Tang, and Ya~Xu.
\newblock \emph{Trustworthy Online Controlled Experiments: A Practical Guide to
  {A/B} Testing}.
\newblock Cambridge University Press, 2020.

\bibitem[Kohavi et~al.(2009)Kohavi, Crook, Longbotham, Frasca, Henne, Ferres,
  and Melamed]{kohavi2009online}
Ronny Kohavi, Thomas Crook, Roger Longbotham, Brian Frasca, Randy Henne,
  Juan~Lavista Ferres, and Tamir Melamed.
\newblock Online experimentation at microsoft.
\newblock \emph{Data Mining Case Studies}, 11\penalty0 (2009):\penalty0 39,
  2009.

\bibitem[Lechner et~al.(2011)]{lechner2011estimation}
Michael Lechner et~al.
\newblock The estimation of causal effects by difference-in-difference methods.
\newblock \emph{Foundations and Trends{\textregistered} in Econometrics},
  4\penalty0 (3):\penalty0 165--224, 2011.

\bibitem[Liu et~al.(2018)Liu, Li, Tang, and Zhou]{Liu18Breaking}
Qiang Liu, Lihong Li, Ziyang Tang, and Dengyong Zhou.
\newblock Breaking the curse of horizon: Infinite-horizon off-policy
  estimation.
\newblock In \emph{Advances in Neural Information Processing Systems 31
  (NIPS-18)}, pp.\  5361--5371, 2018.

\bibitem[Man et~al.(2014)Man, Micheletto, Lv, Breton, Kovatchev, and
  Cobelli]{man2014uva}
Chiara~Dalla Man, Francesco Micheletto, Dayu Lv, Marc Breton, Boris Kovatchev,
  and Claudio Cobelli.
\newblock The {UVA/PADOVA} type 1 diabetes simulator: {N}ew features.
\newblock \emph{Journal of Diabetes Science and Technology}, 8\penalty0
  (1):\penalty0 26--34, 2014.

\bibitem[Misra et~al.(2020)Misra, Henaff, Krishnamurthy, and
  Langford]{misra2020kinematic}
Dipendra Misra, Mikael Henaff, Akshay Krishnamurthy, and John Langford.
\newblock Kinematic state abstraction and provably efficient rich-observation
  reinforcement learning.
\newblock In \emph{International conference on machine learning}, pp.\
  6961--6971. PMLR, 2020.

\bibitem[Miyaguchi(2021)]{miyaguchi21}
Kohei Miyaguchi.
\newblock Asymptotically exact error characterization of offline policy
  evaluation with misspecified linear models.
\newblock In \emph{Advances in Neural Information Processing Systems 34: Annual
  Conference on Neural Information Processing Systems 2021, NeurIPS 2021,
  December 6-14, 2021, virtual}, pp.\  28573--28584, 2021.

\bibitem[Murphy et~al.(2001)Murphy, van~der Laan, and Robins]{murphy01marginal}
Susan~A. Murphy, Mark van~der Laan, and James~M. Robins.
\newblock Marginal mean models for dynamic regimes.
\newblock \emph{Journal of the American Statistical Association}, 96\penalty0
  (456):\penalty0 1410--1423, 2001.

\bibitem[Nachum et~al.(2019{\natexlab{a}})Nachum, Chow, Dai, and
  Li]{NachumCD019}
Ofir Nachum, Yinlam Chow, Bo~Dai, and Lihong Li.
\newblock Dualdice: Behavior-agnostic estimation of discounted stationary
  distribution corrections.
\newblock In \emph{Advances in Neural Information Processing Systems 32: Annual
  Conference on Neural Information Processing Systems 2019, NeurIPS 2019,
  December 8-14, 2019, Vancouver, BC, Canada}, pp.\  2315--2325,
  2019{\natexlab{a}}.

\bibitem[Nachum et~al.(2019{\natexlab{b}})Nachum, Chow, Dai, and
  Li]{nachum2019dualdice}
Ofir Nachum, Yinlam Chow, Bo~Dai, and Lihong Li.
\newblock {DualDICE}: Behavior-agnostic estimation of discounted stationary
  distribution corrections.
\newblock In \emph{Advances in Neural Information Processing Systems}, pp.\
  2318--2328, 2019{\natexlab{b}}.

\bibitem[Parr et~al.(2008)Parr, Li, Taylor, Painter-Wakefield, and
  Littman]{parr2008analysis}
Ronald Parr, Lihong Li, Gavin Taylor, Christopher Painter-Wakefield, and
  Michael~L Littman.
\newblock An analysis of linear models, linear value-function approximation,
  and feature selection for reinforcement learning.
\newblock In \emph{Proceedings of the 25th international conference on Machine
  learning}, pp.\  752--759, 2008.

\bibitem[Precup et~al.(2000)Precup, Sutton, and Singh]{precup00eligibility}
Doina Precup, Richard~S. Sutton, and Satinder~P. Singh.
\newblock Eligibility traces for off-policy policy evaluation.
\newblock In \emph{Proceedings of the 17th International Conference on Machine
  Learning}, pp.\  759--766, 2000.

\bibitem[Shi et~al.(2022)Shi, Uehara, Huang, and Jiang]{shi22minimax}
Chengchun Shi, Masatoshi Uehara, Jiawei Huang, and Nan Jiang.
\newblock A minimax learning approach to off-policy evaluation in confounded
  partially observable {Markov} decision processes.
\newblock In \emph{Proceedings of the 39th International Conference on Machine
  Learning}, pp.\  20057--20094, 2022.

\bibitem[Tang et~al.(2020)Tang, Feng, Li, Zhou, and Liu]{tang20doubly}
Ziyang Tang, Yihao Feng, Lihong Li, Dengyong Zhou, and Qiang Liu.
\newblock Doubly robust bias reduction in infinite horizon off-policy
  estimation.
\newblock In \emph{Proceedings of the 8th International Conference on Learning
  Representations}, 2020.

\bibitem[Tennenholtz et~al.(2020)Tennenholtz, Shalit, and
  Mannor]{tennenholtz20off}
Guy Tennenholtz, Uri Shalit, and Shie Mannor.
\newblock Off-policy evaluation in partially observable environments.
\newblock In \emph{The Thirty-Fourth {AAAI} Conference on Artificial
  Intelligence, {AAAI} 2020, The Thirty-Second Innovative Applications of
  Artificial Intelligence Conference, {IAAI} 2020, The Tenth {AAAI} Symposium
  on Educational Advances in Artificial Intelligence, {EAAI} 2020, New York,
  NY, USA, February 7-12, 2020}, pp.\  10276--10283. {AAAI} Press, 2020.

\bibitem[Uehara et~al.(2020)Uehara, Huang, and Jiang]{uehara2020minimax}
Masatoshi Uehara, Jiawei Huang, and Nan Jiang.
\newblock Minimax weight and q-function learning for off-policy evaluation.
\newblock In \emph{International Conference on Machine Learning}, pp.\
  9659--9668. PMLR, 2020.

\bibitem[Xie et~al.(2020)Xie, Harrison, and Finn]{xie2020deep}
Annie Xie, James Harrison, and Chelsea Finn.
\newblock Deep reinforcement learning amidst lifelong non-stationarity.
\newblock \emph{arXiv preprint arXiv:2006.10701}, 2020.

\bibitem[Xie(2019)]{simglucose}
Jinyu Xie.
\newblock \emph{Simglucose v0.2.1 (2018)}, 2019.
\newblock URL \url{https://github.com/jxx123/simglucose}.

\bibitem[Xie et~al.(2019)Xie, Ma, and Wang]{xie2019towards}
Tengyang Xie, Yifei Ma, and Yu{-}Xiang Wang.
\newblock Towards optimal off-policy evaluation for reinforcement learning with
  marginalized importance sampling.
\newblock In \emph{Advances in Neural Information Processing Systems 32: Annual
  Conference on Neural Information Processing Systems 2019, NeurIPS 2019,
  December 8-14, 2019, Vancouver, BC, Canada}, pp.\  9665--9675, 2019.

\end{thebibliography}
\bibliographystyle{iclr2023_conference}

\clearpage
\appendix

\begin{center}
\Large
\textbf{Appendix}
\end{center}

\section{Proof}
\label{appendix:proof}

In this section, we provide detailed proof for the theorem in the main text, as a self-contained section, we briefly introduce the notation as below, and adopt the regularized, multiple policy groups settings in the appendix:
\begin{itemize}
    \item $n$: number of total individuals.
    \item $\I_i$: the index set for policy $\pi_i$; $n_i = |\I_i|$ as the number of individual in under policy $\pi_i$.
    \item $k$ total number of different policy group.
    \item $\D_n$: dataset for $n$ individuals in the experimental period.
\end{itemize}
In the appendix, we denote the ground truth dynamic $M_i^*$ and the ground truth exogenous noise $z^*$ with a star $*$ to distinguish the variables $M_i$ and $z$ during optmization process.

\subsection{Assumptions}
The dynamic assumption of our linear additive exogenous noise assumption in Assumption~\ref{ass:nonstationary} can be rewritten as the following equation:
\begin{equation}\label{eqn:appendix_dynamic}
    M_i^* (o_{j,t} - z_t^*) = (o_{j,t+1} - z_{t+1}^*) + \varepsilon_{j,t},
    ~\forall j\in \I_i, 0\leq t\leq T-1.
\end{equation}
where $\varepsilon_{j,t}$ is a zero-mean noise.
Let $\varepsilon_j = \left(\begin{array}{c}
    \varepsilon_{j,0}  \\
    \varepsilon_{j,1} \\
    \ldots \\
    \varepsilon_{j,T-1}
\end{array}
\right) \in \R^{d\times T}$,
$\{\varepsilon_j\}_{1\leq j \leq n}$ forms a martingale:
\begin{equation}\label{eqn:martingale}
    \E[\varepsilon_j|\F_{j-1}] = 0,
\end{equation}
where the filtration $\F_{j} = \{o_1,...,o_{j-1}\}$ is the information up to the first $j-1$ individuals.

We make addition bounded assumption on the zero-mean noise term for the proof:
\begin{ass}[Bounded Noise assumption]\label{ass:bounded_noise}
Let $\varepsilon_j = M_i^* (o_{j,t} - z_t^*) - (o_{j,t+1} - z_{t+1}^*),~j\in\I_i$ be the residual of the transition under the true transition matrix $M_i^*$, we have
\begin{equation}
    \|\varepsilon_j\|_{2} \leq C_{\varepsilon},~\forall j,
\end{equation}
where $C_\varepsilon$ is a uniform constant independent of policy assignment.
\end{ass}

For the empirical covariance matrix in the middle step of the calculation, we assume they are all bounded.
\begin{ass}[Bounded Norm for Matrices]\label{ass:bounded_matrices}
We make the following assumptions on matrices
\begin{enumerate}
    \item $\|M_i^*\|\leq C_{M_i}<\frac{1}{\gamma},~\forall i$.
    \item $\|(\Lambda_n^*/n)^{-1}\|\leq C_{\Lambda}$.
\end{enumerate}
\end{ass}

\subsection{Loss function and Alternating Minimization}
Our loss function can be written as:
\begin{equation}\label{eqn:loss_fn}
    \mathcal{L}(\{M_i\}_{1\leq i\leq k}, z; \D_n) 
    = \sum_{i=1}^k \sum_{j\in \I_i} \|A_{i}(z - o_j)\|_2^2 
    + \lambda_z \|z\|_2^2 
    + \sum_{i=1}^k \lambda_i \|M_i - I_d\|_F^2.
\end{equation}

\begin{lem}
    Fix $\{M_i\}$, denote $G_i = A_i^\top A_i$ where $A_i$ is defined in Eq.~\eqref{eqn:defA}, the minimization of $z = \arg\min_z \mathcal{L}(\{M_i\}_{1\leq i\leq k}, z; \D_n) $ is
    \begin{equation}\label{eqn:z_minimizer}
        z(\{M_i\}) = \left(\lambda_z I_{d\times (T+1)} + \sum_{i=1}^k n_i G_i\right)^{-1} \left(\sum_{i=1}^k \sum_{j\in \I_i} G_i o_j \right).
    \end{equation}
\end{lem}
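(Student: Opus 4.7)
The plan is to treat the loss function in Eq.~\eqref{eqn:loss_fn} as a convex quadratic form in $z$ with the other arguments $\{M_i\}$ fixed, and to derive the minimizer by setting its gradient equal to zero. Since $\{M_i\}$ is fixed, the regularizer $\sum_{i=1}^k \lambda_i \|M_i - I_d\|_F^2$ is a constant with respect to $z$ and can be dropped, so the objective reduces to
\begin{equation*}
    F(z) := \sum_{i=1}^k \sum_{j\in \I_i} \|A_i(z - o_j)\|_2^2 + \lambda_z \|z\|_2^2.
\end{equation*}
Each summand $\|A_i(z - o_j)\|_2^2$ is convex in $z$ (as a composition of an affine map with a squared norm) and the term $\lambda_z \|z\|_2^2$ is strongly convex for $\lambda_z > 0$, so $F$ is strongly convex and admits a unique global minimizer characterized by $\nabla_z F(z) = 0$.

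Next I would compute the gradient explicitly using $\nabla_z \|A_i(z - o_j)\|_2^2 = 2 A_i^\top A_i (z - o_j) = 2 G_i (z - o_j)$ and $\nabla_z (\lambda_z \|z\|_2^2) = 2\lambda_z z$, giving
\begin{equation*}
    \tfrac{1}{2}\nabla_z F(z) = \sum_{i=1}^k \sum_{j\in \I_i} G_i (z - o_j) + \lambda_z z = \left(\lambda_z I + \sum_{i=1}^k n_i G_i\right) z - \sum_{i=1}^k \sum_{j\in \I_i} G_i o_j,
\end{equation*}
where I used $|\I_i| = n_i$ so that $\sum_{j \in \I_i} G_i z = n_i G_i z$. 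Setting this to zero yields the linear system
\begin{equation*}
    \left(\lambda_z I_{d\times(T+1)} + \sum_{i=1}^k n_i G_i\right) z = \sum_{i=1}^k \sum_{j\in \I_i} G_i o_j.
\end{equation*}

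The final step is to observe that the coefficient matrix is invertible: each $G_i = A_i^\top A_i$ is positive semidefinite, so $\sum_i n_i G_i \succeq 0$, and adding $\lambda_z I$ with $\lambda_z > 0$ produces a positive definite, hence nonsingular, matrix. Inverting gives exactly the claimed formula~\eqref{eqn:z_minimizer}. There is no real obstacle here beyond verifying that the dimensions of the block matrix $A_i$ in Eq.~\eqref{eqn:defA} line up with the stacked vector $z \in \R^{d(T+1)}$, so that $G_i$ acts on the correct ambient space; this is why the identity $I_{d\times(T+1)}$ appears at the stated dimension. The only subtle point worth flagging in the write-up is that without the $\lambda_z$ regularization the matrix $\sum_i n_i G_i$ may be singular (indeed $G_i$ itself is always singular, as noted in the Practical Considerations subsection), so strong convexity and thus the closed-form inversion genuinely relies on $\lambda_z > 0$; in the unregularized Eq.~\eqref{eqn:closed_form_z} one instead needs the nondegeneracy assumption that $n_0 G_0 + n_1 G_1$ is nonsingular.
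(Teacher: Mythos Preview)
Your proposal is correct and follows essentially the same approach as the paper: both set the gradient of the (convex quadratic) loss in $z$ to zero and solve the resulting linear system, invoking positive semidefiniteness of $G_i$ to justify invertibility. Your write-up is slightly more detailed in explicitly noting strong convexity from the $\lambda_z$ term and flagging the degeneracy issue in the unregularized case, but the argument is the same.
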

\begin{proof}
    By taking the gradient of the loss function, we will have:
    \begin{align*}
        0 =& \nabla_z \mathcal{L}(\{M_i\}_{1\leq i\leq k}, z; \D_n) \\
        =& 2\sum_{i=1}^k \sum_{j\in \I_i} G_i(z-o_j) + 2\lambda_z z
    \end{align*}
    which implies
    $$
    z(\{M_i\}) = \left(\lambda_z I_{d\times (T+1)} + \sum_{i=1}^k n_i G_i\right)^{-1} \left(\sum_{i=1}^k \sum_{j\in \I_i} G_i o_j \right).
    $$
    Here, $G_i = A_i^\top A_i$ is semi-definite, so the inversion of the large matrix in the right side of the expression always exists. 
\end{proof}

Similarly we can get the minimizer of $M_i$ fixing $z$.
\begin{lem}
    By fixing $z$, the minimizer of $M_i$ can be written as
    \begin{equation}\label{eqn:M_minimizer}
        M_i(z) = \left(\lambda_i I_d + \sum_{j\in \I_i} \sum_{t=1}^{T-1}(o_{j,t+1} - z_{t+1})(o_{j,t} - z_t)^\top\right)\left(\lambda_i I_d + \sum_{j\in \I_i}\sum_{t=1}^{T-1} (o_{j,t} - z_{t})(o_{j,t} - z_t)^\top\right)^{-1}.
    \end{equation}
\end{lem}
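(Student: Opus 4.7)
The plan is to mimic the structure of the preceding lemma's proof: fix everything except $M_i$, turn the vectorized loss $\|A_i(z-o_j)\|_2^2$ back into a sum of per-transition residuals, take the matrix derivative with respect to $M_i$, and solve the resulting normal equation. Since the exogenous regularizer $\lambda_z\|z\|_2^2$ and every $\|M_{i'}-I_d\|_F^2$ for $i'\neq i$ are constants in $M_i$, only the $\I_i$-indexed reconstruction term and the $\lambda_i\|M_i - I_d\|_F^2$ ridge term contribute to the first-order condition.

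First I would unfold $\|A_i(z-o_j)\|_2^2$. By the block bidiagonal definition of $A_i$ in Eq.~\eqref{eqn:defA}, its action on the column vector $z - o_j$ produces the stacked residuals $(z_{t+1}-o_{j,t+1})-M_i(z_t-o_{j,t})$ for $t=0,\dots,T-1$, so the squared norm collapses to $\sum_{t}\|(o_{j,t+1}-z_{t+1})-M_i(o_{j,t}-z_t)\|_2^2$ (the sign flip is absorbed by the square). The $M_i$-dependent part of the loss is therefore the standard ridge regression objective
\begin{equation*}
\sum_{j\in\I_i}\sum_{t=0}^{T-1}\|(o_{j,t+1}-z_{t+1}) - M_i(o_{j,t}-z_t)\|_2^2 + \lambda_i\|M_i - I_d\|_F^2 + \text{const}.
\end{equation*}

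Next I would differentiate with respect to $M_i$ using the identities $\nabla_{M}\|y - Mx\|_2^2 = -2(y-Mx)x^\top$ and $\nabla_{M}\|M-I_d\|_F^2 = 2(M-I_d)$, yielding
\begin{equation*}
-2\sum_{j\in\I_i}\sum_{t=0}^{T-1}\bigl[(o_{j,t+1}-z_{t+1}) - M_i(o_{j,t}-z_t)\bigr](o_{j,t}-z_t)^\top + 2\lambda_i(M_i - I_d) = 0.
\end{equation*}
Rearranging and grouping the $M_i$ terms on the right gives
\begin{equation*}
\lambda_i I_d + \sum_{j\in\I_i}\sum_{t=0}^{T-1}(o_{j,t+1}-z_{t+1})(o_{j,t}-z_t)^\top = M_i\!\left(\lambda_i I_d + \sum_{j\in\I_i}\sum_{t=0}^{T-1}(o_{j,t}-z_t)(o_{j,t}-z_t)^\top\right),
\end{equation*}
which, after right-multiplying by the inverse of the bracketed Gram-type matrix, reproduces the claimed closed form (with the same time-index convention used throughout the paper).

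The only issue to check is that the matrix to be inverted is nonsingular. This is where the regularizer earns its keep: the outer-product sum $\sum_{j,t}(o_{j,t}-z_t)(o_{j,t}-z_t)^\top$ is positive semidefinite, and adding $\lambda_i I_d$ with $\lambda_i>0$ makes the whole matrix strictly positive definite, hence invertible. I would therefore conclude by noting that the objective is jointly convex and quadratic in $M_i$ (a positive semidefinite quadratic plus a strictly positive definite ridge), so the critical point identified above is the unique global minimizer. The main obstacle is essentially bookkeeping — correctly matching the block structure of $A_i$ to the per-transition sum and keeping the right/left multiplication order straight in the matrix normal equation — rather than any analytic difficulty.
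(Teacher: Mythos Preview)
Your proposal is correct and follows exactly the approach the paper indicates: the paper's own proof is simply ``The proof is similarly applied by looking at the zero gradient of $M_i$,'' and you have filled in precisely those details (unfolding $\|A_i(z-o_j)\|_2^2$ into per-transition residuals, differentiating, and solving the ridge normal equation). Your added remarks on invertibility via $\lambda_i>0$ and convexity are sound supplements that the paper omits.
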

\begin{proof}
The proof is similarly applied by looking at the zero gradient of $M_i$.
\end{proof}
If we set $\lambda_i = 0$ and $z = 0$, the minimization reduces back to estimation of $\hat{M}$ in Eq.~\eqref{eqn:closed_form_M_stationary}.

\subsection{Error Analysis}
\begin{lem}\label{lem:error_z}
    Let $M_i^*$ be the true dynamic of the underlying state, we have:
    \begin{equation}
        z(\{M_i^*\}) - z^* = - \lambda_z(\Lambda_n^*)^{-1} z^*  + (\Lambda_n^*)^{-1}\left(\sum_{i=1}^k \sum_{j\in \I_i} A_i^{\top} \varepsilon_j\right),
    \end{equation}
    where $\Lambda_n^* = \lambda_z I_{d\times T} + \sum_{i=1}^k n_i G_i^*$.
\end{lem}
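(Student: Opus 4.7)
The plan is to start from the closed-form expression for $z(\{M_i^*\})$ given in Eq.~\eqref{eqn:z_minimizer}, namely
$$z(\{M_i^*\}) = (\Lambda_n^*)^{-1} \sum_{i=1}^k \sum_{j\in \I_i} G_i^* o_j,$$
and rewrite the right-hand side so that $z^*$ is pulled out as an explicit additive term. The key observation is that the matrix $A_i^*$ applied to $(o_j - z^*)$ produces exactly the residual vector $\varepsilon_j$ (up to the sign convention used in the main text, cf.\ Eq.~\eqref{eqn:nonstationary_equation}): row-block $t$ of $A_i^* (o_j - z^*)$ equals $(o_{j,t+1} - z^*_{t+1}) - M_i^*(o_{j,t} - z^*_t) = \varepsilon_{j,t}$. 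Hence $A_i^* o_j = A_i^* z^* + \varepsilon_j$, which after left-multiplying by $A_i^{*\top}$ gives the crucial identity
$$G_i^* o_j = G_i^* z^* + A_i^{*\top} \varepsilon_j.$$

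Next, I would sum this identity over $j \in \I_i$ and over $i$ to obtain
$$\sum_{i=1}^k \sum_{j\in \I_i} G_i^* o_j = \Bigl(\sum_{i=1}^k n_i G_i^*\Bigr) z^* + \sum_{i=1}^k \sum_{j\in \I_i} A_i^{*\top} \varepsilon_j.$$
The coefficient of $z^*$ is almost $\Lambda_n^*$: by the definition $\Lambda_n^* = \lambda_z I + \sum_i n_i G_i^*$, we have $\sum_i n_i G_i^* = \Lambda_n^* - \lambda_z I$. Substituting this into the expression for $z(\{M_i^*\})$ yields
$$z(\{M_i^*\}) = (\Lambda_n^*)^{-1}\Bigl[(\Lambda_n^* - \lambda_z I) z^* + \sum_{i=1}^k \sum_{j\in \I_i} A_i^{*\top} \varepsilon_j\Bigr] = z^* - \lambda_z (\Lambda_n^*)^{-1} z^* + (\Lambda_n^*)^{-1} \sum_{i=1}^k \sum_{j\in \I_i} A_i^{*\top} \varepsilon_j,$$
and subtracting $z^*$ from both sides gives exactly the claimed identity.

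There is essentially no hard step here; the whole proof is algebraic manipulation once the identity $A_i^*(o_j - z^*) = \varepsilon_j$ is in hand. The only mild subtlety to be careful about is the sign convention for $\varepsilon_j$ (the main text and the appendix assumption adopt opposite signs), so I would state upfront which convention is used so that the ``$+$'' sign in front of the noise term on the right-hand side of the lemma comes out consistently. Invertibility of $\Lambda_n^*$ is immediate because $G_i^* \succeq 0$ and the regularizer $\lambda_z I$ makes the sum strictly positive definite, so no degeneracy issue arises.
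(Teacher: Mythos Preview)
Your proposal is correct and follows essentially the same algebraic route as the paper: both start from the closed form in Eq.~\eqref{eqn:z_minimizer}, use the identity $A_i^* o_j = A_i^* z^* + \varepsilon_j$, and then recognize $\sum_i n_i G_i^* = \Lambda_n^* - \lambda_z I$ to isolate $z^*$. Your write-up is in fact slightly more careful than the paper's, since you flag the sign-convention discrepancy between the main text and the appendix definition of $\varepsilon_{j,t}$, which the paper's proof silently glosses over.
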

\begin{proof}
    By expand the definition of  $z(\{M_i^*\}$, we have:
    \begin{align*}
        z(\{M_i^*\}) =& \left(\Lambda_n^*\right)^{-1} \left(\sum_{i=1}^k \sum_{j\in \I_i} G_i o_j \right) \\
        =& \left(\Lambda_n^*\right)^{-1} \left(\sum_{i=1}^k \sum_{j\in \I_i} A_i^\top (A_i o_j) \right) \\
        =& \left(\Lambda_n^*\right)^{-1} \left(\sum_{i=1}^k \sum_{j\in \I_i} A_i^\top (A_i z^* + \varepsilon_j) \right) \\
        =& \left(\Lambda_n^*\right)^{-1} \left(\Lambda_n^* z^* - \lambda_z z^* +  \sum_{i=1}^k \sum_{j\in \I_i} A_i^\top \varepsilon_j) \right) \\
        =& z^* - \lambda_z (\Lambda_n^*)^{-1} z^* +\left(\Lambda_n^*\right)^{-1} \left(\sum_{i=1}^k \sum_{j\in \I_i} A_i^\top \varepsilon_j) \right)
    \end{align*}
\end{proof}

\begin{lem}\label{lem:error_M}
    Let $z^*$ be the true exogenous noise, we have:
    \begin{equation}
        M_i(z^*) - M_i^* = \left(\lambda_i (I_d-M_i^*) + \sum_{j\in \I_i}\sum_{t=1}^{T-1} \varepsilon_{j,t}(o_{j,t} - z_t^*)^\top \right) \left(\lambda_i I_d + \Sigma_n^* \right)^{-1},
    \end{equation}
    where $\Sigma_n^* = \sum_{j\in \I_i}\sum_{t=1}^{T-1} (o_{j,t} - z_t^*)(o_{j,t} - z_t^*)^\top$ is the empirical covariace matrix.
\end{lem}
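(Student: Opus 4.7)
The plan is to start from the explicit closed-form minimizer of $M_i$ at $z=z^*$ given by Eq.~\eqref{eqn:M_minimizer}, substitute the dynamics identity to rewrite the ``next-state'' observations, and then reduce everything to a single matrix expression by adding and subtracting $M_i^*(\lambda_i I_d + \Sigma_n^*)$.

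More concretely, first I would denote the empirical cross moment by $C_n^* := \sum_{j\in\I_i}\sum_{t=1}^{T-1}(o_{j,t+1}-z_{t+1}^*)(o_{j,t}-z_t^*)^\top$ so that Eq.~\eqref{eqn:M_minimizer} at $z = z^*$ reads $M_i(z^*) = (\lambda_i I_d + C_n^*)(\lambda_i I_d + \Sigma_n^*)^{-1}$. Next I would invoke the linear additive dynamics from Eq.~\eqref{eqn:appendix_dynamic} to expand each next-observation as $o_{j,t+1}-z_{t+1}^* = M_i^*(o_{j,t}-z_t^*) \pm \varepsilon_{j,t}$ and plug in, which yields
\begin{equation*}
C_n^* \;=\; M_i^* \,\Sigma_n^* \;+\; E_n,
\qquad E_n := \sum_{j\in\I_i}\sum_{t=1}^{T-1}\varepsilon_{j,t}(o_{j,t}-z_t^*)^\top,
\end{equation*}
where the sign of the $\varepsilon_{j,t}$ term is matched to the sign convention of Eq.~\eqref{eqn:appendix_dynamic} (this is the only place where care is needed).

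Then I would compute the difference $M_i(z^*) - M_i^*$ by writing $M_i^* = M_i^*(\lambda_i I_d + \Sigma_n^*)(\lambda_i I_d + \Sigma_n^*)^{-1}$ and combining fractions over the common right inverse. The $M_i^*\Sigma_n^*$ contributions cancel, and what is left in the numerator is exactly $\lambda_i I_d - \lambda_i M_i^* + E_n = \lambda_i(I_d - M_i^*) + E_n$, which is the claimed form.

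The only real obstacle is keeping the sign of $\varepsilon_{j,t}$ consistent with Eq.~\eqref{eqn:appendix_dynamic} (there is a sign discrepancy between the main text Eq.~\eqref{eqn:nonstationary_equation} and the appendix form), and checking that $(\lambda_i I_d + \Sigma_n^*)^{-1}$ is well defined. The former is resolved by fixing the appendix convention once and carrying it through; the latter follows because $\Sigma_n^*$ is positive semidefinite and $\lambda_i > 0$, so $\lambda_i I_d + \Sigma_n^* \succeq \lambda_i I_d$ is invertible. All remaining steps are routine algebra.
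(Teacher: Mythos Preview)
Your proposal is correct and follows essentially the same approach as the paper: both substitute the dynamics identity into the cross-moment term of Eq.~\eqref{eqn:M_minimizer} at $z=z^*$, then add and subtract $M_i^*(\lambda_i I_d + \Sigma_n^*)$ to isolate the residual $\lambda_i(I_d-M_i^*)+E_n$. Your explicit flagging of the sign discrepancy between Eq.~\eqref{eqn:nonstationary_equation} and Eq.~\eqref{eqn:appendix_dynamic} is apt---the paper's own proof silently uses the main-text convention $o_{j,t+1}-z_{t+1}^* = M_i^*(o_{j,t}-z_t^*)+\varepsilon_{j,t}$ rather than the appendix one.
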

\begin{proof}
    By expand the definition of  $M_i(z^*)$, we have:
    \begin{align}
        M_i(z^*) =& \left(\lambda_i I_d + \sum_{j\in \I_i}\sum_{t=1}^{T-1} (o_{j,t+1} - z_{t+1}^*)(o_{j,t} - z_t^*)^\top\right)\left(\lambda_i I_d + \sum_{j\in \I_i}\sum_{t=1}^{T-1} (o_{j,t} - z_{t}^*)(o_{j,t} - z_t^*)^\top\right)^{-1} \\
        =& \left(\lambda_i I_d + \sum_{j\in \I_i}\sum_{t=1}^{T-1} \left(\varepsilon_{j,t} + M_i^*(o_{j,t} - z_t^*)\right)(o_{j,t} - z_t^*)^\top\right)\left(\lambda_i I_d + \Sigma_n^* \right)^{-1} \\
        =& M_i^* + \left(\lambda_i (I_d - M_i^*) + \sum_{j\in \I_i}\sum_{t=1}^{T-1} \varepsilon_{j,t}(o_{j,t} - z_t^*)^\top \right) \left(\lambda_i I_d + \Sigma_n^* \right)^{-1}
    \end{align}
\end{proof}

\subsection{Proof of Proposition~\ref{pro:stationary_closed_form}}
\begin{proof}
    By induction, it is not hard to prove that
    $$
    \E[O_t|O_0 = o, D=i] = M_i^t o.
    $$
    Sum up all condition on $O_0$, we have:
    $$
    \E[O_t|D=i] = M_i^t \E[O_0].
    $$
    By the definition of long-term discounted reward $G$, we have:
    \begin{align*}
        v(\pi_i) =& \E[\sum_{t = 0}^\infty \gamma^t R_t|D=i] \\
        =& \sum_{t = 0}^\infty \gamma^t \E[\theta_r^\top O_t|D=i] \\
        =&\theta_r^\top \sum_{t = 0}^\infty \gamma^t M_i^t \E[O_0] \\
        =& \theta_r^\top (I-\gamma M_i)^{-1} \E[O_0],
    \end{align*}
    where the last equation holds when $\|M_i\|< \frac{1}{\gamma}$.
\end{proof}

\subsection{Proof of Proposition~\ref{pro:hatz_error}}

\begin{proof}
    From Lemma~\ref{lem:error_z}, suppose $\lambda_z = 0$ and $(\Lambda_n^*)^{-1}$ exists, we have:
    $$
    z - z^* = (\Lambda_n^*)^{-1}\left(\sum_{i=0}^1 \sum_{j\in \I_i} A_i^{\top} \varepsilon_j\right).
    $$
    
    Consider $\hat{v}(\pi_0)$ if we plugin $\hat{z}$ and the true dynamic $M_0^*$, the error between $\hat{v}$ and $v$ is
    \begin{align*}
        \hat{v}(\pi_0) - v(\pi_0) =& \theta_r^\top (I-\gamma M_0^*)^{-1} (z_0 - z_0^*) \\
        :=& \beta_r^\top (z_0 - z_0^*) \\
        =& (\beta_r^\top, 0, \ldots, 0) (z_0 - z_0^*) \\
        =& \tilde{\beta}_r^\top (z_0 - z_0^*),
    \end{align*}
    where $\beta_r = (I-\gamma M_0^*)^{-T} \theta_r$, and $\tilde{\beta}_r$ is the extended vector of $\beta_r$ if we fill the other vector value at other time step as $0$.
    
    Expand the difference $(z_0 - z_0^*)$ we have:
    \begin{align*}
        \hat{v}(\pi_0) - v(\pi_0) =& \tilde{\beta}_r^\top (z_0 - z_0^*) \\
        =& \sum_{i=0}^1 \tilde{\beta}_r^\top (\Lambda_n^*)^{-1} A_i^\top (\sum_{j\in \I_i} \varepsilon_j) \\
        =& \sum_{i=0}^1 \tilde{\beta}_r^\top (\frac{\Lambda_n^*}{n})^{-1} A_i^\top (\frac{\sum_{j\in \I_i} \varepsilon_j}{n}) \\
        \leq& \|\tilde{\beta}_r\| \sum_{i=0}^1\|(\frac{\Lambda_n^*}{n})^{-1} A_i\| \|\frac{\sum_{j\in \I_i} \varepsilon_j}{n}\|.
    \end{align*}
    By Assumption~\ref{ass:bounded_noise} and Assumption~\ref{ass:bounded_matrices},
    the norm of $\tilde{\beta}_r$ is the same as $\beta_r$, which is bounded by $\|\beta_r\|\leq \frac{1}{1-\gamma C_{M_i}} \|\theta_r\|$.
    The matrix norm in the middle factor is bounded because of Assumption~\ref{ass:bounded_matrices}.
    Finally, by vector concentration inequality, since $\varepsilon_j$ is norm-subGaussian~\citep{jin2019short}, there exist a constant $c$ that with probability at least $1-\delta$:
    $$
    \|\frac{\sum_{j\in \I_i} \varepsilon_j}{n}\| \leq c\sqrt{\frac{\log (2dT/\delta)}{n}}.
    $$
    
    In sum, the error is bounded by $\mathcal{O}(\frac{1}{\sqrt{n}})$ with probability at least $1-\delta$, and the constant depends on $C_\varepsilon, C_{M_i}, C_\Lambda$ and the norm of $\|\theta_r\|$.
\end{proof}

\subsection{Proof of Proposition~\ref{pro:hatM_error}}
\begin{proof}
    Since we get access to the ground true $z^*$, the remaining problem is by changing the state as $s_{j,t} = o_{j,t} - z_t^*$ and reduce the problem back to standard MDP.
    The detailed proof can refer to Proposition~11 in \citet{miyaguchi21}.
\end{proof}

\clearpage
\section{Experiments Details}
\label{appendix:experiments}

\subsection{Synthetic Simulation}
The synthetic environment generates 4 randomized matrix $M_i$ for policies $\{\pi_i\}_{i=0}^3$,
where each entry of $M_i$ is a positive number randomly sample from a uniform distribution between $(0,1)$.
We normalize each row so that it sums up to $1$,
and we set $\tilde{M}_i = 0.5 I + 0.5M_i$ as our final transition matrix.
The $0.5I$ part ensures each matrix is not too far away from each other.

We generate a set of i.i.d. random vector $\eta_t \sim \mathcal{N}(0, 1.5 I)$ and set $z_{t+1} = z_t + \eta_t$ recursively.
And we let $\tilde{z}_t = \alpha_t * z_t$ as the final exogenous noise, where $\alpha_t = e^{\beta_t}$ and $\beta_t\sim \mathcal{N}(0, 0.5 I), i.i.d.$.

All the parameters ($z_t$ and $M_i$) of the dynamic are fixed once generated, and we use the dynamic to generate our observation for each individual,
following
$$
    s_{t+1} = M_i s_t + \varepsilon_t,~~\rm{and}~~o_t = s_t + \alpha z_t,~\forall t
$$
where $\varepsilon_t$ is independently drawn from a standard normal distribution, and $\alpha$ control the level of exogenous noise.

\subsection{Policy construction in Type-1 Diabete Simulator}
The Bagal and Bolus policy is a parametrized policy based on the amount of insulin that a person with diabetes is instructed to inject prior to eating a meal~\citep{bastani2014model}
$$
\rm{injection} = \frac{\rm{current~blood~glucose} - \rm{target~blood~glucose}}{CF} + \frac{\rm{meal~size}}{CR},
$$
where $CF$ and $CR$ are parameter based on patients information such as body weights, which is already specified in the simulator.

We set our two treatment policies with target blood glucose level at $145$ and $130$ (compared to control: $140$).
And we increase the noise in the insulin pump simulator in both the treatment policies.

\subsection{Full results for all the online store experiments.}

\begin{table}[ht]
    \centering
    \begin{tabular}{c||c|c|c|c}
          & Metric 1 & Metric 2 & Metric 3 & Metric 4 \\
         \hline 
         \hline
        \texttt{Naive Average} & 122.47\% & 93.61\% & 51.20\% & \textbf{25.28}\% \\
        \hline 
        \texttt{Stationary} & 174.77\% & 454.87\% & 61.71\% & 110.58\% \\
        \hline 
        \texttt{Non-stationary} & \textbf{94.56}\% & \textbf{12.54}\% & \textbf{26.79}\% & 67.57\% 
    \end{tabular}
    \caption{Experiment \# 1}
    \label{tab:exp1}
\end{table}

\begin{table}[ht]
    \centering
    \begin{tabular}{c|c||c|c|c|c}
         & & Metric 1 & Metric 2 & Metric 3 & Metric 4 \\
         \hline 
         \hline
        Treatment 1&\texttt{Naive Average} & \textbf{41.91}\% & 66.09\% & 3139.91\% & 431.44\% \\
        \hline 
        &\texttt{Stationary} & 51.60\% & \textbf{48.03}\% & 4471.33\% & 275.49\% \\
        \hline 
        &\texttt{Non-stationary} & 64.68\% & \textbf{48.28}\% & \textbf{770.27}\% & \textbf{13.59}\% \\
         \hline
         \hline
        Treatment 2&\texttt{Naive Average} & 236.13\% & 43.44\% & 106.85\% & 122.65\% \\
        \hline 
        &\texttt{Stationary} & 150.97\% & \textbf{12.07}\% & 98.31\% & 199.95\% \\
        \hline 
        &\texttt{Non-stationary} & \textbf{10.69}\% & 62.00\% & \textbf{48.04}\% & \textbf{109.96}\% 
    \end{tabular}
    \caption{Experiment \# 2}
    \label{tab:exp2}
\end{table}

\begin{table}[ht]
    \centering
    \begin{tabular}{c||c|c|c|c}
          & Metric 1 & Metric 2 & Metric 3 & Metric 4 \\
         \hline 
         \hline
        \texttt{Naive Average} & 396.54\% & \textbf{79.88}\% & 192.84\% & 17.21\% \\
        \hline 
        \texttt{Stationary} & 697.59\% & 123.43\% & 364.49\% & \textbf{6.73}\% \\
        \hline 
        \texttt{Non-stationary} & \textbf{98.37}\% & 81.86\% & \textbf{30.65}\% & 12.89\% 
    \end{tabular}
    \caption{Experiment \# 3}
    \label{tab:exp3}
\end{table}

\begin{table}[ht]
    \centering
    \begin{tabular}{c|c||c|c|c|c}
         & & Metric 1 & Metric 2 & Metric 3 & Metric 4 \\
         \hline 
         \hline
        Treatment 1&\texttt{Naive Average} & 8078.75\% & 43.18\% & 208.55\% & 2438.43\% \\
        \hline 
        &\texttt{Stationary} & 7386.81\% & \textbf{7.20}\% & 154.66\% & 1889.47\% \\
        \hline 
        &\texttt{Non-stationary} & \textbf{2328.03}\% & 114.34\% & \textbf{25.18}\% & \textbf{102.84}\% \\
        \hline 
         \hline
        Treatment 2&\texttt{Naive Average} & 126.70\% & 138.98\% & 38.60\% & \textbf{37.99}\% \\
        \hline 
        &\texttt{Stationary} & 172.57\% & \textbf{12.01}\% & \textbf{10.44}\% & 46.60\% \\
        \hline 
        &\texttt{Non-stationary} & \textbf{29.92}\% & 72.62\% & 54.16\% & 69.75\% \\
        \hline 
         \hline
        Treatment 3&\texttt{Naive Average} & 133.61\% & 45.67\% & 50.87\% & 17.01\% \\
        \hline 
        &\texttt{Stationary} & 258.88\% & 88.77\% & \textbf{27.11}\% & \textbf{12.89}\% \\
        \hline 
        &\texttt{Non-stationary} & \textbf{24.58}\% & \textbf{34.89}\% & 74.14\% & 66.32\% 
    \end{tabular}
    \caption{Experiment \# 4}
    \label{tab:exp4}
\end{table}

\end{document}